\newcites{new}{References}
\newcommand\tikzmark[1]{\tikz[remember picture] \node (#1) {};}
\newcommand{\vertiii}[1]{{\left\vert\kern-0.25ex\left\vert\kern-0.25ex\left\vert #1 
    \right\vert\kern-0.25ex\right\vert\kern-0.25ex\right\vert}}
\theoremstyle{plain}
\newtheorem{theorem}{Theorem}
\newtheorem{lemma}{Lemma}
\theoremstyle{definition}
\newtheorem{definition}{Definition}
\theoremstyle{remark}
\newcommand\CoAuthorMark{\footnotemark[\arabic{footnote}]}
\title{Separable Physics-Informed Neural Networks}
\author{
    Junwoo Cho$^{1}$\thanks{Equal contribution.}\qquad\qquad\qquad Seungtae Nam$^{1}$\protect\CoAuthorMark\qquad\qquad\qquad Hyunmo Yang$^{1}$\\ \ \ \ \textbf{Seok-Bae Yun}$^{2}$\ \quad\qquad\qquad \textbf{Youngjoon Hong}$^{3}$\qquad\qquad\quad \textbf{Eunbyung Park}$^{1, 4}$\thanks{Corresponding author.}\\[3pt]
    $^1$Department of Artificial Intelligence, Sungkyunkwan University\\ $^2$Department of Mathematics, Sungkyunkwan University\\ 
    $^3$Department of Mathematical Sciences, KAIST\\
    $^4$Department of Electrical and Computer Engineering, Sungkyunkwan University\\[3pt]
}
\begin{document}

\maketitle

\vspace{-1em}

\begin{abstract}
Physics-informed neural networks (PINNs) have recently emerged as promising data-driven PDE solvers showing encouraging results on various PDEs. 
However, there is a fundamental limitation of training PINNs to solve multi-dimensional PDEs and approximate highly complex solution functions.
The number of training points (collocation points) required on these challenging PDEs grows substantially, but it is severely limited due to the expensive computational costs and heavy memory overhead.
To overcome this issue, we propose a network architecture and training algorithm for PINNs.
The proposed method, \emph{separable PINN (SPINN)}, operates on a per-axis basis to significantly reduce the number of network propagations in multi-dimensional PDEs unlike point-wise processing in conventional PINNs.
We also propose using forward-mode automatic differentiation to reduce the computational cost of computing PDE residuals, enabling a large number of collocation points ($>10^7$) on a single commodity GPU. 
%Our approach makes the training computations and memory costs remarkably less susceptible to the number of collocation points.
The experimental results show drastically reduced computational costs ($62\times$ in wall-clock time, $1,394\times$ in FLOPs given the same number of collocation points) in multi-dimensional PDEs while achieving better accuracy.
Furthermore, we present that SPINN can solve a chaotic (2+1)-d Navier-Stokes equation significantly faster than the best-performing prior method (9 minutes vs 10 hours in a single GPU), maintaining accuracy.
Finally, we showcase that SPINN can accurately obtain the solution of a highly nonlinear and multi-dimensional PDE, a (3+1)-d Navier-Stokes equation.
For visualized results and code, please see \url{https://jwcho5576.github.io/spinn.github.io/}.
\end{abstract}

\section{Introduction}

Solving partial differential equations (PDEs) has been a long-standing problem in various science and engineering domains.
Finding analytic solutions requires in-depth expertise and is often infeasible in many useful and important PDEs~\cite{fefferman2000existence}.
Hence, numerical approximation methods to solutions have been extensively studied~\cite{hoffman2018numerical}, e.g., spectral methods~\cite{boyd2001chebyshev}, finite volume methods (FVM)~\cite{eymard2000finite}, finite difference method (FDM)~\cite{smith1985numerical}, and finite element methods (FEM)~\cite{zienkiewicz2005finite}.
While successful, classical methods have several limitations, such as expensive computational costs, requiring sophisticated techniques to support multi-physics and multi-scale systems, and the curse of dimensionality in high dimensional PDEs.

With the vast increases in computational power and methodological advances in machine learning, researchers have explored data-driven and learning-based methods~\cite{brunton2022data, karniadakis2021physics, li2021fourier, sirignano2018dgm}.
Among the promising methods, physics-informed neural networks (PINNs) have recently emerged as new data-driven PDE solvers for both forward and inverse problems~\cite{raissi2019physics}.
PINNs employ neural networks and gradient-based optimization algorithms to represent and obtain the solutions, leveraging automatic differentiation to enforce the physical constraints of underlying PDE.
It has enjoyed great success in various forward and inverse problems thanks to its numerous benefits, such as flexibility in handling a wide range of forward and inverse problems, mesh-free solutions, and not requiring observational data, hence, unsupervised training.

% Why more training points 1
Despite its advantages and promising results, there is a fundamental limitation of training PINN to solve multi-dimensional PDEs and approximate very complex solution functions.
It primarily stems from using coordinate-based MLP architectures to represent the solution function, which takes input coordinates and outputs corresponding solution quantities.
For each training point, computing PDE residual loss involves multiple forward and backward propagations, and the number of training points (collocation points) required to solve multi-dimensional PDEs and obtain more accurate solutions grows substantially.
The situation deteriorates as the dimensionality of the PDE or the solution's complexity increases.

\begin{figure*}[t]
\centering
  \includegraphics[width=\textwidth]{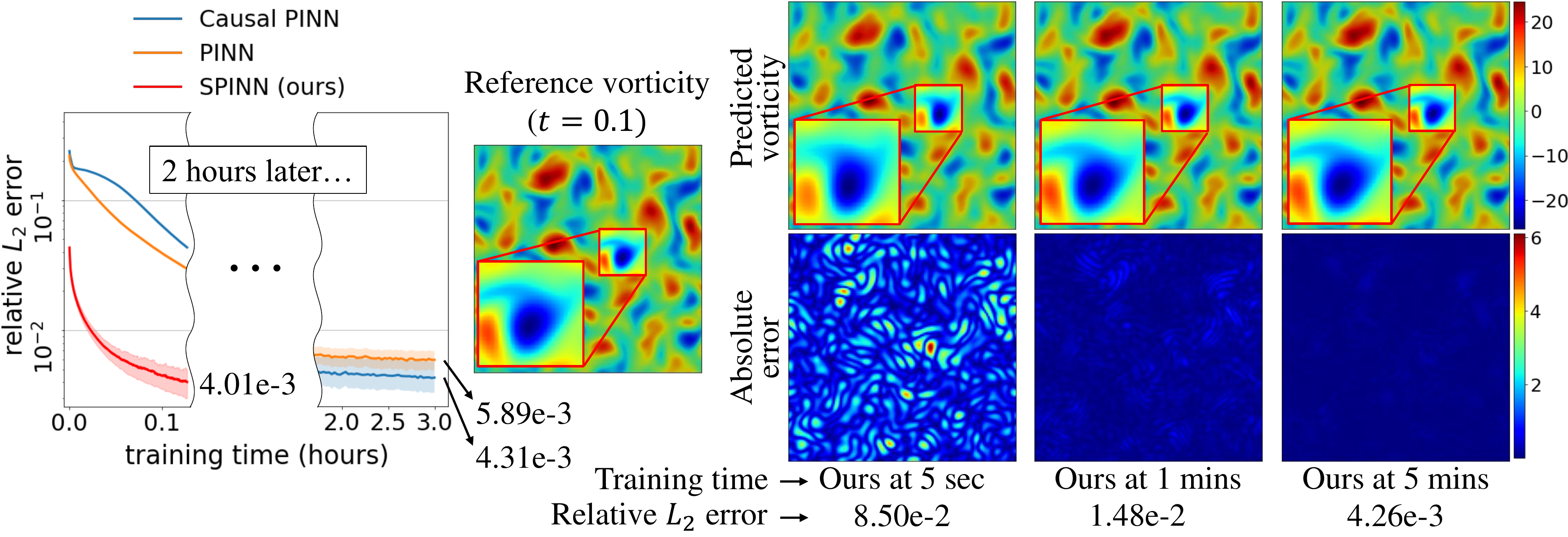}
  \vspace{-1.5em}
  \caption{Training speed (w/ a single GPU) of our model compared to the causal PINN~\cite{wang2022respecting} in (2+1)-d Navier-Stokes equation of time interval [0, 0.1].}
  \label{fig:intro}
\end{figure*}

% Why more training points 2
Recent studies have presented empirical evidence showing that choosing a larger batch size (i.e., a large number of collocation points) in training PINNs leads to enhanced precision~\cite{raissi2019physics, sankaranimpact, sharma2022accelerated, wang2022respecting}.
Furthermore, more collocation points also accelerate the convergence speed due to the scaling characteristic between the batch size and the learning rate (the larger the batch size, the higher the learning rate)~\cite{goyal2017accurate, krizhevsky2014one, l.2018dont}.
As long as the computational resources allow, we have the flexibility to utilize a substantial number of training points in PINNs since they can be continuously sampled from the input domain in an unsupervised training manner.
% Since the training data points for PINNs can be indefinitely sampled from the input domain (unsupervised training), we can leverage a large number of training points as long as the computational resource permits.

%Proposing SPINN
We propose a novel PINN architecture, \emph{separable PINN (SPINN)}, which utilizes forward-mode\footnote{a.k.a. forward accumulation mode or tangent linear mode.} automatic differentiation (AD) to enable a large number of collocation points ($>10^7$ in a single GPU), reducing the computational cost of solving multi-dimensional PDEs.
Instead of feeding every multi-dimensional coordinate into a single MLP, we use separated sub-networks, in which each sub-network takes independent one-dimensional coordinates as input.
The final output is generated by an aggregation module such as simple outer product and element-wise summation where the predicted solution can be interpreted by low-rank tensor approximation~\cite{kolda2009tensor} (Fig.~\ref{fig:architecture}b).
The suggested architecture obviates the need to query every multi-dimensional coordinate input pair, exponentially reducing the number of network propagations to generate a solution, $\mathcal{O}(N^d) \rightarrow \mathcal{O}(Nd)$, where $N$ is the resolution of the solution for each dimension, and $d$ is the dimension of the system.

% Experimental results
We have conducted comprehensive experiments on the representative PDEs to show the effectiveness of the suggested method.
The experimental results demonstrate that the training runtime of the conventional PINN increase linearly with the number of collocation points, while the proposed model shows logarithmic growths.
This allows SPINN to accommodate orders of magnitude larger number of collocation points \emph{in a single batch} during training.
We also show that given the same number of training points, SPINN improves wall-clock training time up to by $62\times$ on commodity GPUs and FLOPs up to by $1,394\times$ while achieving better accuracy.
Furthermore, with large-scale collocation points, SPINN can solve a turbulent Navier-Stokes equation much faster than the state-of-the-art PINN method~\cite{wang2022respecting} (9 minutes vs 10 hours in a single GPU) without bells and whistles, such as causal inductive bias in the loss function (Fig.~\ref{fig:intro}).
Our experimental results of the Navier-Stokes equation show that SPINN can solve highly nonlinear PDEs and is sufficiently expressive to represent complex functions.
This is further supported by the provided theoretical result, potentiating the use of our method for more challenging and various PDEs.
% The source code and data used in this draft will be publicly released upon acceptance.

\section{Related Works}

\paragraph{Physics-informed neural networks.}
Physics-Informed Neural Networks (PINNs)~\cite{raissi2019physics} have received great attention as a promising learning-based PDE solver.
Given the underlying PDE and initial, boundary conditions embedded in a loss function, a coordinate-based neural network is trained to approximate the desired solution.
Since its inception, many techniques have been studied to improve training PINNs for more challenging PDE systems~\cite{krishnapriyan2021characterizing, wang2022respecting, wang2021understanding}, or to accelerate the training speed~\cite{jagtap2020adaptive, sharma2022accelerated}.
Our method is orthogonal to most of the previously suggested techniques above and improves PINNs' training from a computational perspective. 

\paragraph{The effect of collocation points.}
The residual loss of PINNs is calculated by Monte Carlo integration, not an exact definite integral. Therefore, it inherits a core property of Monte Carlo methods~\cite{hammersley2013monte}: the impact of the number of sampled points.
The importance of sampling strategy and the number of collocation points in training PINNs has been highlighted by recent works~\cite{daw2022rethinking, jiao2022rate, sankaranimpact, wang2022respecting}.
Especially, Sankaran et al.~\cite{sankaranimpact} empirically found that training with a large number of collocation points is unconditionally favorable for PINNs in terms of accuracy and convergence speed.
Another line of research established a theoretical upper bound of PINN's statistical error with respect to the number of collocation points~\cite{jiao2022rate}.
It showed that a larger number of collocation points are required to use a bigger network size for training.
Our work builds on this evidence to bring PINNs out in more practical scenarios and overcome their limitation.

\paragraph{Derivative computations in scientific machine learning.}
% Since the computations for obtaining derivatives with respect to input coordinates is a core part of training PINNs (reverse-mode automatic differentiation is commonly used), several studies introduced various methods for calculating the derivatives more efficiently.
% Several studies have introduced techniques to increase the efficiency of derivative calculations in training PINNs, where obtaining derivatives with respect to input coordinates is an essential process, often achieved through reverse-mode automatic differentiation.
Embedding physical constraints in the loss function is widely used in scientific machine learning, and computing the derivatives is an essential process for this formulation.
Several works employed numerical differentiation~\cite{pang2019fpinns, ranade2021discretizationnet, wandel2022spline}, or hybrid approach~\cite{chiu2022can, sharma2022accelerated, zhu2019physics} with AD for the calculation, since numerical methods such as finite differences do not need to back-propagate through the network.
However, they are still burdened by a computational complexity of $\mathcal{O}(N^d)$, thereby limiting them to handle large-scale collocation points or meshes.
Furthermore, numerical differentiation has truncation errors depending on the step size.
% DT-PINN~\cite{sharma2022accelerated} utilized the finite difference (FD) method to accelerate the training.
% They specifically employed the radial basis function-finite differences (RBF-FD) method~\cite{tolstykh2003using}, which can be applied to the irregular domain to maintain the mesh-agnostic characteristic of PINNs.
% However, the RBF-FD method can only be used for calculating spatial derivative, and DT-PINN still had to employ AD for temporal derivative.
% Another work, fPINN~\cite{pang2019fpinns}, also used the FD method to solve fractional-order PDEs, which addresses the limitation of automatic differentiation.
% CAN-PINN~\cite{chiu2022can} adopted both FD and AD to redefine the loss function and accelerate the training.
Employing Taylor-mode AD~\cite{griewank2008evaluating} in training PINNs was introduced by causal PINN~\cite{wang2022respecting} to handle high-order PDEs such as Kuramoto–Sivashinsky equation~\cite{kuramoto1976persistent}.
To the best of our knowledge, the proposed method is the first approach to leverage forward-mode AD in training PINNs, which is fully applicable to both time-dependent and independent PDEs and does not incur any truncation errors.

\paragraph{Multiple MLP networks.}
Employing multiple MLPs for PINNs has been introduced by several works to utilize parallelized training~\cite{jagtap2021extended, jagtap2020conservative, moseley2021finite}.
They share the same concept of dividing the entire spatio-temporal domain and training multiple individual MLPs on each sub-domain.
Although these methods showed promising results, they still suffer from the fundamental problem of heavy computation as the number of collocation points increases.
While these methods decompose input domains, and each small MLP is used to cover a particular sub-domain, we decompose input dimensions and solve PDEs over the entire domain cooperated by all separated MLPs.
In terms of the model architecture and function representation, our work is also related to NAM~\cite{agarwal2021neural}, Haghighat et al.~\cite{haghighat2021physics}, and CoordX~\cite{liang2022coordx}.
NAM~\cite{agarwal2021neural} suggested separated network architectures and inputs, but only for achieving the interpretability of the model's prediction in multi-task learning.
Haghighat et al.~\cite{haghighat2021physics} used multiple MLPs to individually predict each component of the \emph{output} vector.
CoordX's~\cite{liang2022coordx} primary purpose is to reconstruct natural signals, such as images or 3D shapes.
Hence, they are not motivated to improve the efficiency of computing higher-order gradients.
In addition, they had to use additional layers after the feature merging step, which made their model more computationally expensive than ours.
Furthermore, in neural fields~\cite{xie2022neural}, there is an explicit limitation in the number of ground truth data points (e.g., the number of pixels in an image).
Therefore, CoordX cannot fully maximize the advantage of enabling a large number of input coordinates.
We focus on solving PDEs and carefully devise the architecture to exploit forward-mode AD to efficiently compute PDE residual losses.

\section{Preliminaries: Forward/Reverse-mode AD}
\begin{wraptable}{r}{8.4cm}
\scriptsize
\setlength\tabcolsep{0.5pt}
\centering
\vspace{-2.5em}
\caption{An example of forward and reverse-mode AD in a two-layers tanh MLP. Here $v_0$ denotes the input variable, $v_k$ the primals, $\dot{v}_k$ the tangents, $\bar{v}_k$ the adjoints, and $W_1, W_2$ the weight matrices. Biases are omitted for brevity.\\}
    \begin{tabular*}{8.4cm}{ll ll lll}
        % \hline
        \toprule
        \multicolumn{2}{l}{Forward primal trace} & \multicolumn{2}{c}{Forward tangent trace} & \multicolumn{3}{l}{Backward adjoint trace} \\
        \tikzmark{a} & $v_{0} = x$ & \tikzmark{c} & $\dot{v}_{0} = \dot{x}$ & \tikzmark{e} & \multicolumn{2}{l}{$\bar{x}\mspace{6mu} = \bar{v}_{0}$} \\ \cline{2-2} \cline{4-4} \cline{6-7}
        & $v_1 = W_1 \cdot v_0$ && $\dot{v}_{1} = W_1 \cdot \dot{v}_{0}$ &&
        $\bar{v}_{0} = \bar{v}_1 \cdot \frac{\partial{v_1}}{\partial{v}_{0}}$ & $=\bar{v}_{1} \cdot W_1$ \\
        & $v_2 = \text{tanh}(v_1)$ && $\dot{v}_2 = \text{tanh}^{'}(v_1) \circ \dot{v}_{1}$ &&
        $\bar{v}_{1} = \bar{v}_{2} \cdot \frac{\partial{v_2}}{\partial{v_1}}$ & $=\bar{v}_{2} \circ \text{tanh}^{'}(v_1)$ \\
        & $v_3 = W_2 \cdot v_2$ && $\dot{v}_{3} = W_2 \cdot \dot{v}_{2}$ &&
        $\bar{v}_{2} = \bar{v}_{3} \cdot \frac{\partial{v_3}}{\partial{v_2}}$ & $=\bar{v}_{3} \cdot W_2$ \\
        \cline{2-2} \cline{4-4} \cline{6-7}
        \tikzmark{b} & $y\mspace{6mu} = v_3$ & \tikzmark{d} & $\dot{y}\mspace{6mu} = \dot{v}_3$ & \tikzmark{f} & \multicolumn{2}{l}{$\bar{v}_{3} = \bar{y}$} \\
        \bottomrule
    \end{tabular*}
    \tikz[remember picture,overlay] \draw[-latex, line width=0.25mm] (a.north -| b.south) -- (b.south);
    \tikz[remember picture,overlay] \draw[-latex, line width=0.25mm] (c.north -| d.south) -- (d.south);
    \tikz[remember picture,overlay] \draw[-latex, line width=0.25mm] (f.south -| e.north) -- (e.north);
    \label{table:ad_trace}
\end{wraptable}

For the completeness of this paper, we start by briefly introducing the two types of AD and how Jacobian matrices are evaluated.
For clarity, we will follow the notations used in~\cite{baydin2018automatic} and~\cite{griewank2008evaluating}.
Suppose our function $f:\mathbb{R}^n\rightarrow\mathbb{R}^m$ is a two-layer MLP with `tanh' activation.
The left-hand side of Tab.~\ref{table:ad_trace} demonstrates a single forward trace of $f$.
To obtain a $m \times n$ Jacobian matrix $\mathbb{J}_f$ in forward-mode, we compute the Jacobian-vector product (JVP),
\begin{align}
% \vspace{-1.5em}
    \mathbb{J}_{f} r = 
    \begin{bmatrix}
        \frac{\partial y_1}{\partial x_1} & \hdots & \frac{\partial y_1}{\partial x_n}\\
        \vdots & \ddots & \vdots\\
        \frac{\partial y_m}{\partial x_1} & \hdots & \frac{\partial y_m}{\partial x_n}
    \end{bmatrix}
    \begin{bmatrix}
        \frac{\partial x_1}{\partial x_i}\\
        \vdots\\
        \frac{\partial x_n}{\partial x_i}
    \end{bmatrix},
% \vspace{-1em}
\end{align}
for $i\in\{1,\hdots,n\}$.
The forward-mode AD is a one-phase process: while tracing primals (intermediate values) $v_k$, it continues to evaluate and accumulate their tangents $\dot{v_k}=\partial v_k/\partial x_i$ (the middle column of Tab.~\ref{table:ad_trace}).
This is equivalent to decomposing one large JVP into a series of JVPs by the chain rule and computing them from right to left.
A run of JVP with the initial tangents $\dot{v_0}$ as the first column vector of an identity matrix $I_n$ gives the first column of $\mathbb{J}_f$.
Thus, the full Jacobian can be obtained in $n$ forward passes.

On the other hand, the reverse-mode AD computes vector-Jacobian product (VJP):
\begin{align}
    r^\top \mathbb{J}_{f} = 
    \begin{bmatrix}
        \frac{\partial y_j}{\partial y_1} & \hdots & \frac{\partial y_j}{\partial y_m}\\
    \end{bmatrix}
    \begin{bmatrix}
        \frac{\partial y_1}{\partial x_1} & \hdots & \frac{\partial y_1}{\partial x_n}\\
        \vdots & \ddots & \vdots\\
        \frac{\partial y_m}{\partial x_1} & \hdots & \frac{\partial y_m}{\partial x_n}
    \end{bmatrix},
\end{align}

\begin{figure}[t]
   \begin{minipage}{0.42\textwidth}
     \centering
     \includegraphics[width=\linewidth]{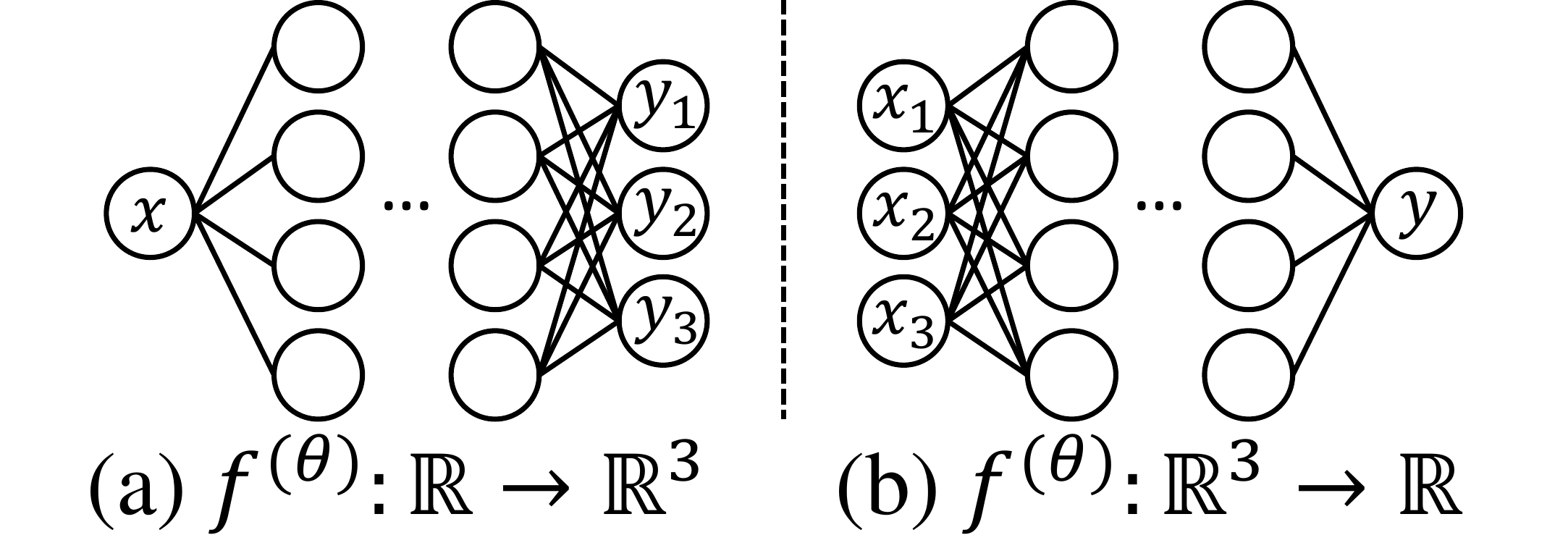}
     \vspace{-0.9em}
     \caption{Simple neural networks with different input and output dimensions. To compute $\frac{\partial y}{\partial x}$, (a) requires one forward pass using forward-mode AD or three backward passes using reverse-mode AD, (b) requires three forward passes using forward-mode AD or one backward pass using reverse-mode AD.\vspace{-0.7em}}\label{fig:ad_example}
   \end{minipage}\hfill
   \begin{minipage}{0.55\textwidth}
     \centering
     \includegraphics[width=\linewidth]{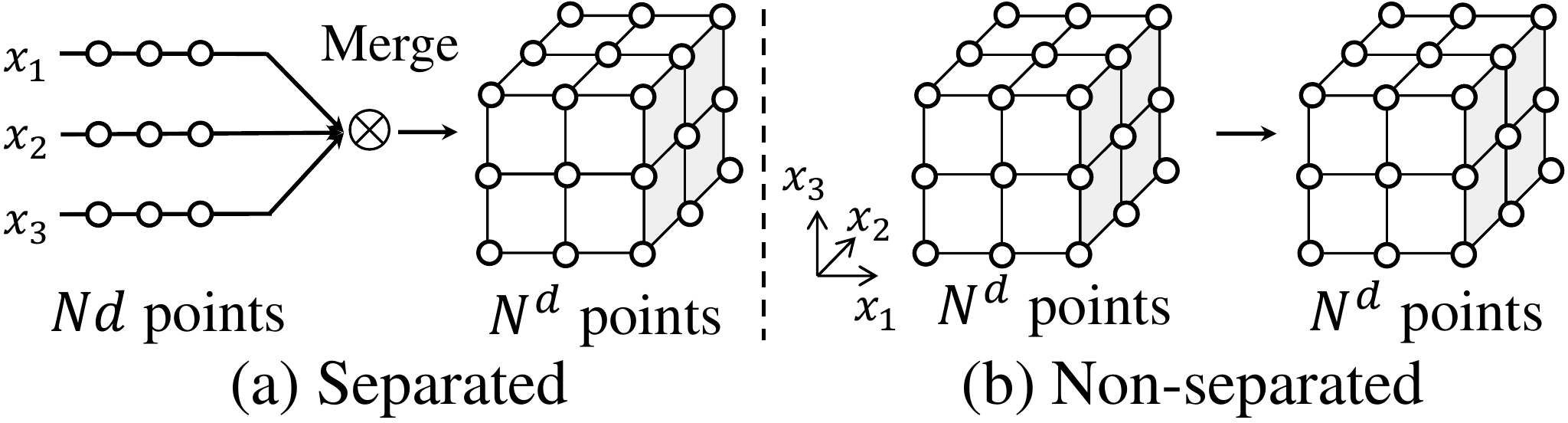}
     \vspace{-1.5em}
     \caption{An illustrative example of separated approach vs non-separated approach when $f^{(\theta)}: \mathbb{R}^d\rightarrow\mathbb{R}$ (an example case of $N=3$, $d=3$ is shown above).
    The number of JVP (forward-mode) evaluations (propagations) of computing the Jacobian for separated approach (a) is $Nd$, while the number of VJP (reverse-mode) evaluations for non-separated approach (b) is $N^d$.}\vspace{-0.7em}\label{fig:separable_ad}
   \end{minipage}
\end{figure}

% \begin{wrapfigure}{r}{7cm}
%   \includegraphics[width=7cm]{neurips_again/fig/ad_example.pdf}
%   \caption{Simple neural networks with different input and output dimensions. To compute $\frac{\partial y}{\partial x}$, (a) requires one forward pass using forward-mode AD or three backward passes using reverse-mode AD, (b) requires three forward passes using forward-mode AD or one backward pass using reverse-mode AD.}
%   \label{fig:ad_example}
% \end{wrapfigure}

for $j\in\{1,\hdots,m\}$, which is the reverse-order operation of JVP.
This is a two-phase process.
The first phase corresponds to forward propagation, storing all the primals, $v_k$, and recording the elementary operations in the computational graph.
In the second phase, the derivatives are computed by accumulating the adjoints $\bar{v}_k=\partial y_j/\partial v_k$ (the right-hand side of Tab.~\ref{table:ad_trace}).
Since VJP builds one row of a Jacobian at a time, it takes $m$ evaluations to obtain the full Jacobian.
To sum up, the forward-mode is more efficient for a tall Jacobian ($m>n$), while the reverse-mode is better suited for a wide Jacobian ($n>m$).
Fig.~\ref{fig:ad_example} shows an illustrative example and please refer to Baydin et al.~\cite{baydin2018automatic} for more details.

\section{Separable PINN}
\subsection{Forward-Mode AD with Separated Functions}
\vspace{-0.5em}

% \begin{wrapfigure}{r}{8cm}
%   \includegraphics[width=8cm]{neurips_again/fig/nonsep_vs_sep.pdf}
%   \caption{An illustrative example of separated approach vs non-separated approach when $f^{(\theta)}: \mathbb{R}^d\rightarrow\mathbb{R}$ (an example case of $N=3$, $d=3$ is shown above).
%   \textcolor{brown}{The number of JVP (forward-mode) evaluations (propagations) of computing the Jacobian for separated approach (a) is $Nd$, while the number of VJP (reverse-mode) evaluations for non-separated approach (b) is $N^d$}.
%   }
%   \label{fig:separable_ad}
% \end{wrapfigure}

We demonstrate that leveraging forward-mode AD and separating the function into multiple functions with respect to input axes can significantly reduce the cost of computing Jacobian.
In the proposed separated approach (Fig.~\ref{fig:separable_ad}a), we first sample $N$ one-dimensional coordinates on each of $d$ axes, which makes a total of $Nd$ batch size.
% Next, these coordinates are fed into $d$ individual functions $f_1, \hdots, f_d$ to generate the features.
Next, these coordinates are fed into $d$ individual functions.
% We denote the number of operations of $f$ as $\mathsf{ops}(f)$, \textcolor{brown}{and assume $\mathsf{ops}(f)=\mathsf{ops}(f_1)=\hdots=\mathsf{ops}(f_d)$}.
Let $f$ be a function which takes coordinate as input to produce feature representation and we denote the number of operations of $f$ as $\mathsf{ops}(f)$.
Then, a feature merging function $h$ is used to construct the solution of the entire $N^d$ discretized points.
The amount of computations for AD is known to be 2$\sim$3 times more expensive than the forward propagation~\cite{baydin2018automatic, griewank2008evaluating}. 
According to Fig.~\ref{fig:separable_ad} and with the scale constants $c_f, c_h \in [2, 3]$ we can approximate the total number of operations to compute the Jacobian matrix of the proposed separated approach (Fig.~\ref{fig:separable_ad}a).
\begin{align}
    \mathcal{C}_{\textrm{sep}}=Nd c_f \mathsf{ops}(f) + N^d c_h \mathsf{ops}(h).
    \label{eq:c_sep}
\end{align}
For a non-separated approach (Fig.~\ref{fig:separable_ad}b),
\begin{align}
    \mathcal{C}_{\textrm{non-sep}}=N^d c_f \mathsf{ops}(f),
    \label{eq:c_non_sep}
\end{align}
If we can make $\mathsf{ops}(h)$ sufficiently small, then the ratio $\mathcal{C}_{\textrm{sep}}/\mathcal{C}_{\textrm{non-sep}}$ becomes
$\frac{Nd}{N^d}\ll1$, quickly converging to 0 as $d$ and $N$ increases. 
% This mitigates the curse of dimensionality for solving high-dimensional PDEs.
Our separated approach has linear complexity with respect to $N$ in network propagations, implying it can obtain more accurate solutions (high-resolution) efficiently.

\begin{figure*}[t]
  \includegraphics[width=\textwidth]{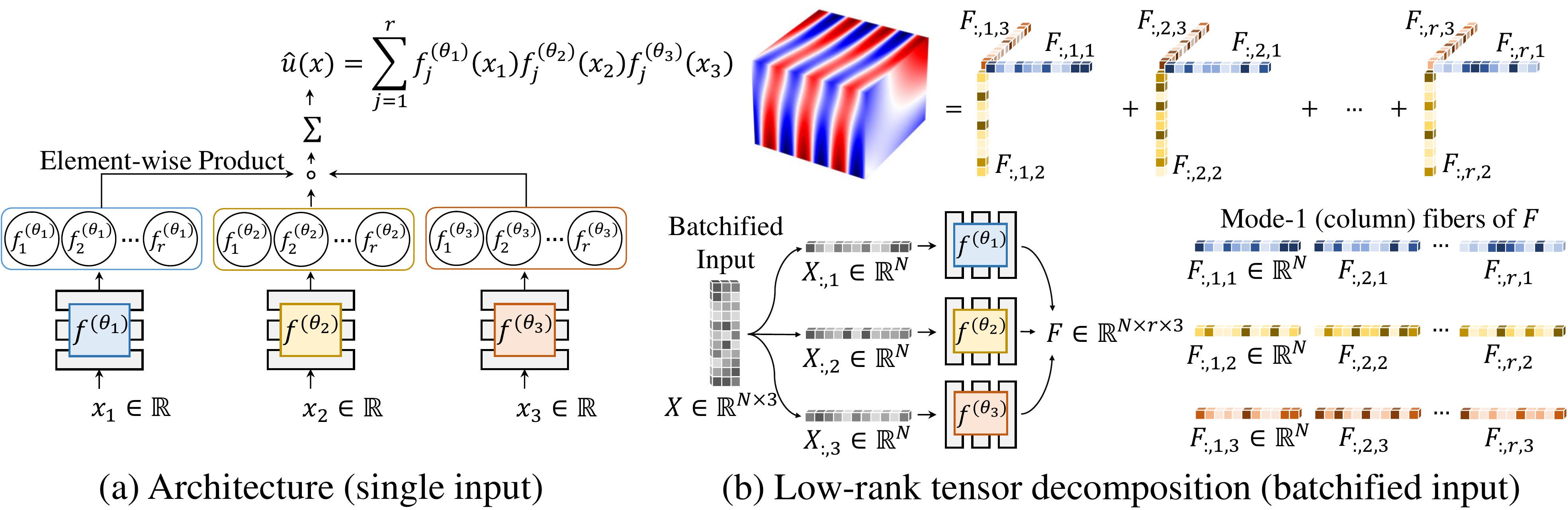}
  \vspace{-1.5em}
  \caption{(a) SPINN architecture in a 3-dimensional system.
  To solve a $d$-dimensional PDE, our model requires $d$ body MLP networks, each of which takes individual scalar coordinate values as input and gives $r$-dimensional feature vector.
  The final output is obtained by element-wise product and summation.
  (b) Construction process of the entire discretized solution tensor when the inputs are given in batches.
  Each outer product between the column vectors $F_{:,j,i}$ from the feature tensor $F$ constructs a rank-1 tensor and summing all the $r$ tensors gives a rank-$r$ tensor.
  The output tensor of SPINN can be interpreted as a low-rank decomposed representation of a solution.}
  \label{fig:architecture}
\end{figure*}

\subsection{Network Architecture}
Fig.~\ref{fig:architecture}a illustrates the overall SPINN architecture, parameterizing multiple separated functions with neural networks.
SPINN consists of $d$ body-networks (MLPs), each of which takes an individual 1-dimensional coordinate component as an input.
Each body-network $f^{(\theta_i)}:\mathbb{R} \rightarrow \mathbb{R}^r$ (parameterized by $\theta_i$) is a vector-valued function which transforms the coordinates of $i$-th axis into a $r$-dimensional feature representation.
The final prediction is computed by feature merging:
\begin{align}
    \hat{u}(x_1, x_2, \hdots, x_d)=\sum_{j=1}^{r}\prod_{i=1}^{d} f^{(\theta_i)}_j(x_i)
    \label{eq:feature_merge}
\end{align}
where $\hat{u}:\mathbb{R}^d\rightarrow\mathbb{R}$ is the predicted solution function, $x_i\in\mathbb{R}$ is a coordinate of $i$-th axis, and $f^{(\theta_i)}_j$ denotes the $j$-th element of $f^{(\theta_i)}$.
We used `tanh' activation function throughout the paper.
As shown in Eq.~\ref{eq:feature_merge}, the feature merging operation is a simple product ($\Pi$) and summation ($\Sigma$) which corresponds to the merging function $h$ described in Eq.~\ref{eq:c_sep}.
Due to its simplicity, $h$ operations are much cheaper than operations in MLP layers (i.e., $\mathsf{ops}(h)\ll\mathsf{ops}(f)$ in Eq.~\ref{eq:c_sep}).
Note that SPINN can also approximate any $m$-dimensional vector functions $\hat{u}:\mathbb{R}^d\rightarrow\mathbb{R}^m$ by using a larger output feature size (see section~\ref{sec:ns2d_supp} in the appendix for details).

\begin{wrapfigure}{r}{7cm}
\vspace{-1em}
  \includegraphics[width=7cm]{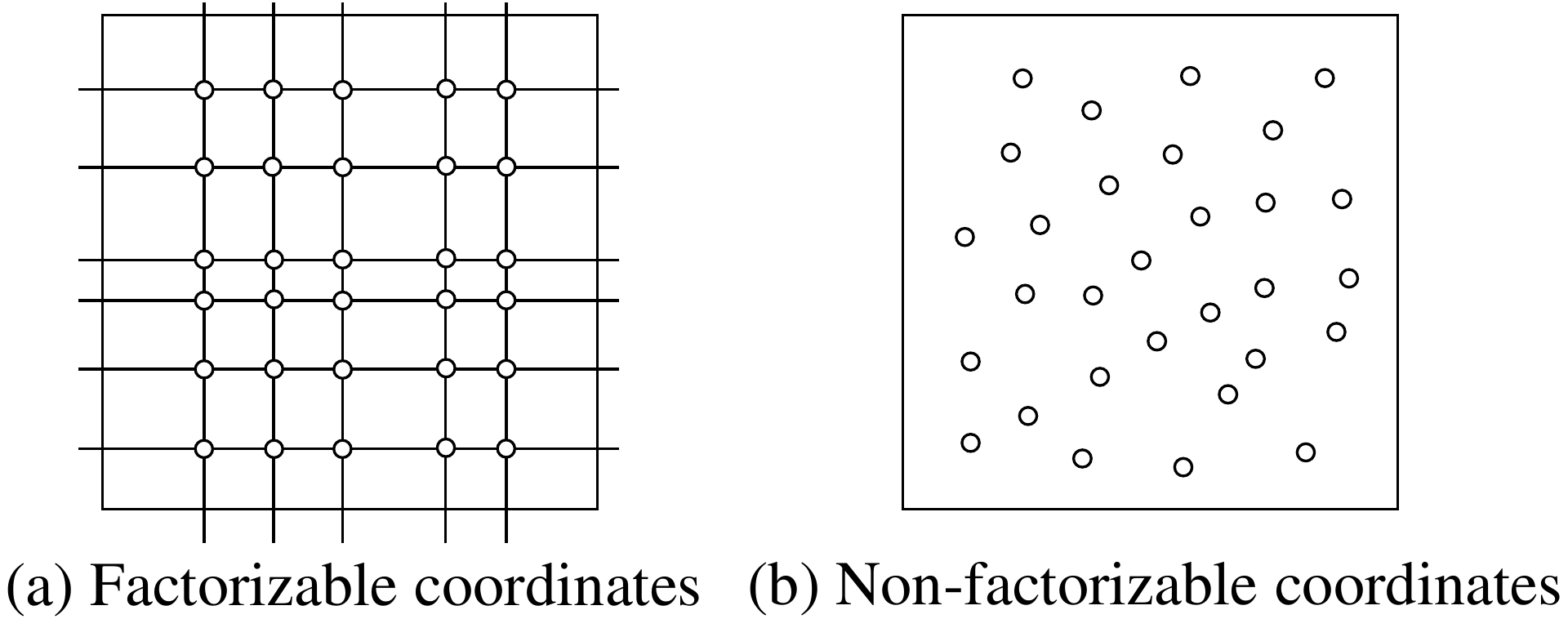}
  \vspace{-1.2em}
  \caption{An illustrative 2-dimensional example of (a) factorizable and (b) non-factorizable coordinates.
  (a) has a lattice-like structure, where SPINN can be evaluated on more dense collocation points with fewer input points.
  Conventional PINNs sample non-factorizable coordinates which do not have any structures.\vspace{-0.5em}}
  %Note that both methods are uniform sampling after all.}
  \label{fig:factorizable_coordinate}
\end{wrapfigure}
The collocation points of our model and conventional PINNs have a distinct difference (Fig.~\ref{fig:factorizable_coordinate}).
Both are \textit{uniformly} evaluated on a $d$-dimensional hypercube, but collocation points of SPINN form a lattice-like structure, which we call as \textit{factorizable coordinates}.
In SPINN, 1-dimensional input points from each axis are randomly sampled, and $d$-dimensional points are generated via the cartesian product of the point sets from each axis.
On the other hand, non-factorizable coordinates are randomly sampled points without any structure.
Factorizable coordinates with our separated MLP architecture enable us to evaluate functions on dense ($N^d$) collocation points with a small number ($Nd$) of input points.

In practice, the input coordinates are given in a batch during training and inference.
Assume that $N$ input coordinates (training points) are sampled from each axis.
Note that the sampling resolutions for each axis need not be the same.
The input coordinates $X\in\mathbb{R}^{N\times d}$ is now a matrix.
The batchified form of feature representation $F\in\mathbb{R}^{N\times r\times d}$ and Eq.~\ref{eq:feature_merge} now becomes
\begin{align}
    &\hat{U}(X_{:,1}, X_{:,2}, \hdots, X_{:,d}) = \sum_{j=1}^{r}\bigotimes_{i=1}^{d}F_{:,j,i},
    \label{eq:batchified_feature_merge}
\end{align}
where $\hat{U}\in\mathbb{R}^{N\times N\times\hdots\times N}$ is the discretized solution tensor, $\bigotimes$ denotes outer product, $F_{:,:,i} \in\mathbb{R}^{N\times r}$ is an $i$-th frontal slice matrix of tensor $F$, and $F_{:,j,i}\in\mathbb{R}^N$ is the $j$-th column of the matrix $F_{:,:,i}$.
Fig.~\ref{fig:architecture}b shows an illustrative procedure of Eq.~\ref{eq:batchified_feature_merge}.
Due to its structural input points and outer products between feature vectors, SPINN's solution approximation can be viewed as a low-rank tensor decomposition where the feature size $r$ is the rank of the reconstructed tensor.
Among many decomposition methods, SPINN corresponds to CP-decomposition~\cite{hitchcock1927expression}, which approximates a tensor by finite summation of rank-1 tensors.
% While traditional methods use iterative methods such as alternating least-squares (ALS) or alternating slicewise diagonalization (ASD)~\cite{jiang2000three} to directly fit the decomposed vectors, we train neural networks which approximate the solution functions in continuous input domains, enabling us to take derivative w.r.t. arbitrary input coordinates.
While traditional methods use iterative methods such as alternating least-squares (ALS) or alternating slicewise diagonalization (ASD)~\cite{jiang2000three} to directly fit the decomposed vectors, we train neural networks to learn the decomposed vector representation and approximate the solution functions in continuous input domains.
This, in turn, allows for the calculation of derivatives with respect to arbitrary input coordinates.

\subsection{Gradient Computation of SPINN}
\label{sec:spinn_gradient}
In this section, we show that the number of JVP (forward-mode AD) evaluations for computing the full gradient of SPINN ($\nabla \hat{u}(x)$) is $Nd$, where $N$ is the number of coordinates sampled from each axis and $d$ is the input dimension.
According to Eq.~\ref{eq:feature_merge}, the $i$-th element of $\nabla \hat{u}(x)$ is:
\begin{align}
    \frac{\partial\hat{u}}{\partial x_i}=\sum_{j=1}^{r}f^{(\theta_1)}_j(x_1)f^{(\theta_2)}_j(x_2)\hdots\frac{\partial f^{(\theta_i)}_j(x_i)}{\partial x_i}\hdots f^{(\theta_d)}_j(x_d).
    \label{eq:spinn_derivative}
\end{align}
% Note that computing the derivative requires the feature representations $f_{\theta_i,j}$, but can be used from a precomputed forward pass result.
Computing this derivative requires feature representations $f^{(\theta_i)}_j$, which can be reused from the forward pass results computed beforehand.
The entire $r$ components of the feature derivatives $\partial f^{(\theta_i)}(x_i)/\partial x_i:\mathbb{R}\rightarrow\mathbb{R}^r$ can be obtained by a single pass thanks to forward-mode AD.
To obtain the full gradient $\nabla \hat{u}(x):\mathbb{R}^d\rightarrow\mathbb{R}^d$, we iterate the calculation of Eq.~\ref{eq:spinn_derivative} over $d$ times, switching the input axis $i$.
% calculating the Eq.~\ref{eq:spinn_derivative} must be iterated over $d$ times, switching the input axis $i$.
Also, since each iteration involves $N$ training samples, the total number of JVP evaluations becomes $Nd$, which is consistent with Eq.~\ref{eq:c_sep}.
Note that any $p$-th order derivative $\partial^p\hat{u}/\partial x_i^p$ can also be obtained in the same way.

\subsection{Universal Approximation Property}
\vspace{-0.3em}
It is widely known that neural networks with sufficiently many hidden units have the expressive power of approximating any continuous functions~\cite{cybenko1989approximation, hornik1989multilayer}.
However, it is not straightforward that our suggested architecture enjoys the same capability. For the completeness of the paper, we provide a universal approximation property of the proposed method.
\begin{theorem}
    (Proof in appendix)
    Let $X, Y$ be compact subsets of $\mathbb{R}^d$.
    Choose $u \in L^2(X\times Y)$.
    Then, for arbitrary $\varepsilon>0$, we can find a sufficiently large $r>0$ and neural networks $f_j$ and $g_j$ such that
    \begin{align}
        \Big\|u-\sum_{j=1}^r f_j g_j\Big\|_{L^2(X\times Y)}<\varepsilon.
    \end{align}
    \label{theorem1_main}
\end{theorem}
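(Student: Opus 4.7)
The plan is to reduce the problem to three standard approximation steps linked by the triangle inequality: first approximate $u$ in $L^2$ by a continuous function on the compact set $X\times Y$, then approximate that continuous function uniformly by a finite sum of products $\sum_j \phi_j(x)\psi_j(y)$ with $\phi_j\in C(X)$, $\psi_j\in C(Y)$, and finally invoke the classical universal approximation theorem to replace each $\phi_j,\psi_j$ by neural networks $f_j,g_j$.

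First, since $X\times Y$ is a compact subset of $\mathbb{R}^{2d}$, continuous functions are dense in $L^2(X\times Y)$, so I can pick $\tilde u\in C(X\times Y)$ with $\|u-\tilde u\|_{L^2}<\varepsilon/3$. Next, consider the subalgebra $\mathcal{A}\subset C(X\times Y)$ generated by functions of the form $(x,y)\mapsto\phi(x)\psi(y)$ with $\phi\in C(X)$, $\psi\in C(Y)$. Because $\mathcal{A}$ contains the constants and the coordinate functions $x_i$ and $y_i$, it separates points of $X\times Y$, and it is clearly closed under sums, products, and scalar multiplication. The Stone-Weierstrass theorem then gives a finite sum
\begin{equation*}
S_r(x,y)=\sum_{j=1}^r \phi_j(x)\psi_j(y)
\end{equation*}
with $\sup_{X\times Y}|\tilde u-S_r|$ arbitrarily small, so in particular $\|\tilde u-S_r\|_{L^2}<\varepsilon/3$ (using that $X\times Y$ has finite Lebesgue measure, since it is bounded).

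For the third step, I apply the classical universal approximation theorem (Cybenko/Hornik) on the compact sets $X$ and $Y$ to each factor: for every $\eta>0$, there exist neural networks $f_j$ and $g_j$ with $\sup_X|\phi_j-f_j|<\eta$ and $\sup_Y|\psi_j-g_j|<\eta$. Writing
\begin{equation*}
\phi_j\psi_j - f_j g_j=(\phi_j-f_j)\psi_j + f_j(\psi_j-g_j),
\end{equation*}
uniform boundedness of $\phi_j,\psi_j$ on the compact sets (and hence of $f_j,g_j$ once $\eta$ is fixed) gives a uniform bound $\sup|\phi_j\psi_j-f_jg_j|\le C_j\eta$. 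Summing over $j=1,\dots,r$ and converting the sup norm to the $L^2$ norm via the finite volume of $X\times Y$, I choose $\eta$ small enough that $\|S_r-\sum_j f_j g_j\|_{L^2}<\varepsilon/3$. A final application of the triangle inequality yields the desired bound.

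The main obstacle I anticipate is the third step, since the product structure means a pointwise approximation error in $\phi_j,\psi_j$ gets amplified by the magnitudes of the other factors and then summed over $j$ terms. This is handled cleanly by first fixing the representation $S_r$ (hence fixing $r$ and uniform bounds on all $\phi_j,\psi_j$), and only afterwards choosing the neural-network tolerance $\eta$ in terms of these now-fixed constants; the verification that this gives a valid $L^2$ estimate is routine once the order of quantifiers is pinned down.
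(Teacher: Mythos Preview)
Your proof is correct and takes a genuinely different route from the paper's. The paper works entirely in the Hilbert-space framework: it identifies $L^2(X\times Y)$ with the tensor product $L^2(X)\otimes L^2(Y)$, picks orthonormal bases $\{\phi_i\}$, $\{\psi_j\}$ of the factors, truncates the double Fourier expansion $u\approx\sum_{i,j}^r a_{ij}\phi_i\psi_j$, and then replaces each $\phi_i,\psi_j$ by neural networks using an $L^2$ version of the universal approximation theorem, with the cross-term controlled by Cauchy--Schwarz and a geometric-series choice of tolerances $\varepsilon/3^j$. You instead pass through $C(X\times Y)$ and Stone--Weierstrass to obtain the separable approximant $\sum_j\phi_j\psi_j$ with continuous factors, then approximate those factors in sup norm by the classical Cybenko/Hornik theorem. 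Your argument is more elementary (no tensor-product-of-Hilbert-spaces lemmas) and the final error estimate is cleaner, since uniform bounds and finite measure convert directly to $L^2$ without Cauchy--Schwarz gymnastics. The paper's approach, on the other hand, makes the rank/coefficient structure $a_{ij}$ explicit and ties more visibly to the low-rank tensor interpretation emphasized elsewhere in the paper. Both arguments handle the key ``order of quantifiers'' issue the same way: fix the finite separable representation first, then choose the neural-network tolerance.
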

%Theorem~\ref{theorem1_main} asserts SPINN can approximate any functions in $L^2$.
By repeatedly applying Theorem~\ref{theorem1_main}, we can show that SPINN can approximate any functions in $L^2$ in the high-dimensional input space.
%An approximation property for a broader function space is a fruitful research area, and we leave it to future work. To support the expressive power of the suggested method, we empirically showed that SPINN can obtain accurate solution functions for different types of PDEs, elliptic, parabolic, and hyperbolic PDEs.
An approximation property for a broader function space is a fruitful research area, and we leave it to future work. To support the expressive power of the suggested method, we empirically showed that SPINN can accurately approximate solution functions of various challenging PDEs, including diffusion, Helmholtz, Klein-Gordon, and Navier-Stokes equations.

\begin{figure}[t]
  \includegraphics[width=\columnwidth]{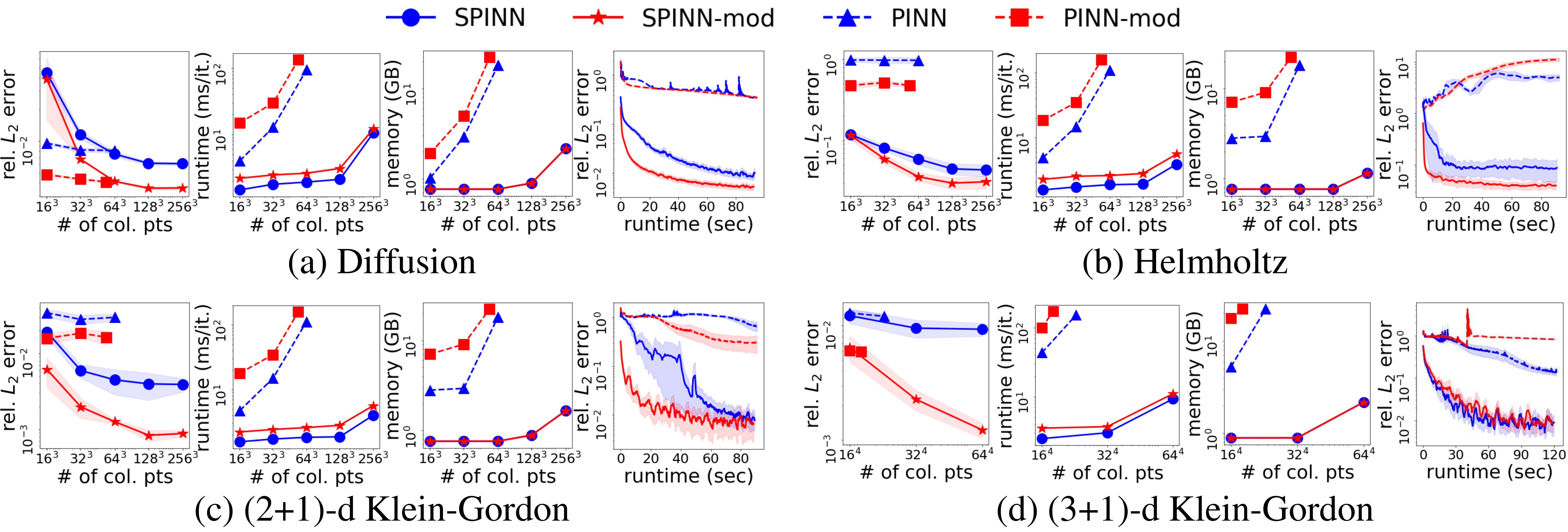}
  \vspace{-1.5em}
  \caption{Overall results of (a) diffusion, (b) Helmholtz, (c) (2+1)-d Klein-Gordon, and (d) (3+1)-d Klein-Gordon experiments.
  It shows comparisons among PINN, PINN with modified MLP (PINN-mod), SPINN, and SPINN with modified MLP (SPINN-mod).
  For each experiment, the first to third columns show the relative error, runtime, and GPU memory versus different numbers of collocation points, respectively.
  The rightmost column shows the training curves of each model when the number of collocation points is $64^3$ ($54^3$ for PINN with modified MLP.)
  For (3+1)-d Klein-Gordon, the training curve is plotted when each model is trained with the number of collocation points of $16^4$.
  The error and the training curves are averaged over 7 different runs and 70\% confidence intervals are provided.
  Note that the y-axis scale of every plot is a log scale.
  \vspace{-0.5em}}
  \label{fig:result_plots}
  % \vspace{-0.5em}
\end{figure}

\section{Experiments}

\subsection{Experimental setups}
\vspace{-0.3em}
We compared SPINN against vanilla PINN~\cite{raissi2019physics} on 3-d (diffusion, Helmholtz, Klein-Gordon, and Navier-Stokes equation) and 4-d (Klein-Gordon and Navier-Stokes) PDE systems.
Every experiment was run on a different number of collocation points, and we also applied the modified MLP introduced in~\cite{wang2021understanding} to both PINN and SPINN.
% For every 3-d experiment, the PDE loss is calculated on $90^3$ collocation points, which means we sampled $90 \cdot 3$ factorized coordinates for SPINN, while $90^3$ points are directly fed into PINN.
For 3-d systems, the number of collocation points of $64^3$ was the upper limit for the vanilla PINN ($54^3$ for modified MLP) when we trained with a single NVIDIA RTX3090 GPU with 24GB of memory.
However, the memory usage of our model was significantly smaller, enabling SPINN to use a larger number of collocation points (up to $256^3$) to get more accurate solutions.
This is because SPINN stores a much smaller batch of tensors, which are the primals ($v_k$ in Tab.~\ref{table:ad_trace}) while building the computational graph.
All reported error metrics are average relative $L_2$ errors computed by $\lVert\hat{u}-u\rVert^2/\lVert u\rVert^2$, where $\hat{u}$ is the model prediction, and $u$ is the reference solution.
Every experiment is performed seven times (three times for (2+1)-d and five times for (3+1)-d Navier-Stokes, respectively) with different random seeds.
More detailed experimental settings are provided in the appendix.

\subsection{Results}
% \vspace{-0.5em}
Fig.~\ref{fig:result_plots} shows the overall results of forward problems on three 3-d systems (diffusion, Helmholtz, and Klein-Gordon) and one 4-d system (Klein-Gordon).
SPINN is significantly more computationally efficient than the baseline PINN in wall-clock run-time.
For every PDE, SPINN with the modified MLP found the most accurate solution.
Furthermore, when the number of collocation points grows exponentially, the memory usage and the actual run-time of SPINN increase almost linearly.
We also confirmed that we can get more accurate solutions with more collocation points.
This training characteristic of SPINN substantiates that our method is very effective for solving multi-dimensional PDEs.
Furthermore, with the help of the method outlined in Griewank and 
 Walther~\cite{griewank2008evaluating}, we estimated the FLOPs for evaluating the derivatives.
Compared to the baseline, SPINN requires $1,394\times$ fewer operations to compute the forward pass, first and second order derivatives.
Further details regarding the FLOPs estimation are provided in section~\ref{sec:flops_estimation} of the appendix.
% See the section \textcolor{red}{\textbf{???}} in the appendix for details.
In the following sections, we give more detailed descriptions and analyses of each experiment.

\vspace{-0.2em}
\paragraph{Diffusion Equation}
% This equation is one of the most representative parabolic PDEs, often used for modeling the heat diffusion process.
% We especially choose a nonlinear diffusion equation where it can be written as:
% \begin{align}
%     &\partial_tu = \alpha\left(\lVert\nabla u\rVert^2 + u\Delta u\right), &x\in\Omega, t\in\Gamma,\label{eq:nonlinear_diffusion}\\
%     &u(x, 0) = u_{\textrm{ic}}(x), &x\in\Omega,\\
%     &u(x, t) = 0, &x\in\partial\Omega, t\in\Gamma.
% \end{align}
% We used diffusivity $\alpha=0.05$, spatial domain $\Omega\in[-1,1]^2$, temporal domain $\Gamma\in[0,1]$ and used superposition of three Gaussian functions for the initial condition $u_{\textrm{ic}}$.

When trained with $128^3$ collocation points, SPINN with the modified MLP finds the most accurate solution.
Furthermore, when trained with the same number of collocation points ($64^3$), SPINN-mod is $52\times$ faster and $29\times$ more memory efficient than the baseline with modified MLP.
You can find the visualized solution in the appendix.
Since we construct the solution relatively simple compared to other PDE experiments, baseline PINNs also find a comparatively good solution.
However, our model finds a more accurate solution with a larger number of collocation points with minor computational overhead.
The exact numerical values are provided in the appendix.

\vspace{-0.2em}
\paragraph{Helmholtz Equation}
% The Helmholtz equation is a time-independent wave equation that takes the form:
% \begin{align}
%     &\Delta u +k^2u = q, &x\in\Omega,\\
%     &u(x) = 0, &x\in\partial\Omega,
% \end{align}
% where the spatial domain is $\Omega\in[-1,1]^3$.
% For a given source term $q=-(a_1\pi)^2u-(a_2\pi)^2u-(a_3\pi)^2u+k^2u$, we devised a manufactured solution $u=\sin(a_1\pi x_1)\sin(a_2\pi x_2)\sin(a_3\pi x_3)$, where we take $k=1, a_1=4, a_2=4, a_3=3$.

The results of the Helmholtz experiments are shown in Fig.~\ref{fig:result_plots}b.
Due to stiffness in the gradient flow, conventional PINNs hinder finding accurate solutions.
Therefore, a modified MLP and learning rate annealing algorithm is suggested to mitigate such phenomenon~\cite{wang2021understanding}.
We found that even PINNs with modified MLP fail to solve the equation when the solution is complex (contains high-frequency components).
However, SPINN obtains one order of magnitude lower relative error without bells and whistles.
This is because each body network of the SPINN learns an individual 1-dimensional function which mitigates the burden of representing the entire 3-dimensional function.
This makes our model much easier to learn complex functions.
Given the same number of collocation points ($64^3$), the training speed of our proposed model with modified MLP is $62\times$ faster, and the memory usage is $29\times$ smaller than the baseline with modified MLP.
The exact numerical values are provided in the appendix.

\vspace{-0.2em}
\paragraph{Klein-Gordon Equation}
% The Klein-Gordon equation is a nonlinear hyperbolic PDE, which arises in diverse applied physics for modeling relativistic wave propagation.
% The inhomogeneous Klein-Gordon equation is given by
% \begin{align}
%     &\partial_{tt}u - \Delta u + u^2 = f, &x\in\Omega, t\in\Gamma \\
%     &u(x,0)=x_1+x_2, &x\in\Omega \\
%     &u(x,t)=u_{\textrm{bc}}(x), &x\in\partial\Omega, t\in\Gamma,
% \end{align}
% where we chose the spatial/temporal domain to be $\Omega\in[-1,1]^2$ and $\Gamma\in[0,10]$, respectively.
% For error measurement, we used a manufactured solution $u=(x_1+x_2)\cos(2t)+x_1x_2\sin(2t)$ and $f$, $u_{bc}$ are extracted from this exact solution.

Fig.~\ref{fig:result_plots}c shows the results.
Again, our method shows the best performance in terms of accuracy, runtime ($62\times$), and memory usage ($29\times$).
Note that both Helmholtz and Klein-Gordon equations contain three second-order derivative terms, while the diffusion equation contains only two.
Our results showed the largest differences in runtime and memory usage in Helmholtz and Klein-Gordon equations.
This is because SPINN significantly reduces the AD computations with forward-mode AD, implying SPINN is very efficient for solving high-order PDEs.

We investigated the Klein-Gordon experiment further, extending it to a 4-d system by adding one more spatial axis.
As shown in Fig.~\ref{fig:result_plots}d, baseline PINN can only process $23^4$ ($18^4$ for modified MLP) collocation points at once due to a high memory throughput.
On the other hand, SPINN can exploit the number of collocation points of $64^4$ ($160\times$ larger than the baseline) to obtain a more precise solution.
The exact numerical values are provided in the appendix.

\begin{center}
\begin{table}[t]
\centering
\caption{The numerical result of (2+1)-d Navier-Stokes equation compared against PINN with modified MLP (PINN+mod) and causal PINN.
$N_c$ is the number of collocation points.
The fourth row shows the training runtime of a single time window in time-marching training. All relative $L_2$ errors are averaged over 3 runs.
}
\begin{justify}
\footnotesize{\textsuperscript{\textdagger} This is the error (the single run result) reported by the causal PINN paper. Since the causal PINN is sensitive to the choice of parameter initialization, we also reported average error obtained from different random seeds by running their official code.}
\end{justify}
\begin{tabular}{c|cc|ccc|ccc}
    \toprule
    model & \multicolumn{2}{c|}{PINN+mod~\cite{wang2021understanding}} & \multicolumn{3}{c|}{causal PINN~\cite{wang2022respecting}} & \multicolumn{3}{c}{SPINN (ours)}\\
    \midrule
    $N_c$ & $2^{12}$ & $2^{15}$ & $2^{12}$ & $2^{15}$ & $2^{15}$ & $2^{15}$ & $2^{18}$ & $2^{21}$ \\
    \midrule
    \multirow{2}{*}{relative $L_2$ error} & 0.0694 & 0.0581 & 0.0578 & 0.0401 & \multirow{2}{*}{0.0353\textsuperscript{\textdagger}} & 0.0780 & 0.0363 & 0.0355 \\[-0.3em]
     & \scriptsize{$\pm$0.0091} & \scriptsize{$\pm$0.0135} & \scriptsize{$\pm$0.0117} & \scriptsize{$\pm$0.0084} & & \scriptsize{$\pm$0.0049} & \scriptsize{$\pm$0.0018} & \scriptsize{$\pm$0.0006} \\[0.3em]
    runtime (hh:mm) & 03:20 & 07:52 & 10:09 & 23:03 & - & 00:07 & 00:09 & 00:14 \\[0.3em]
    memory (MB) & 5,198 & 17,046 & 5,200 & 17,132 & - & 764 & 892 & 1,276 \\
    \bottomrule
\end{tabular}
\vspace{-1em}
\label{table:ns_result}
\end{table}
\end{center}
% \vspace{-1em}

\begin{wrapfigure}{r}{4cm}
\vspace{-1em}
  \includegraphics[width=4cm]{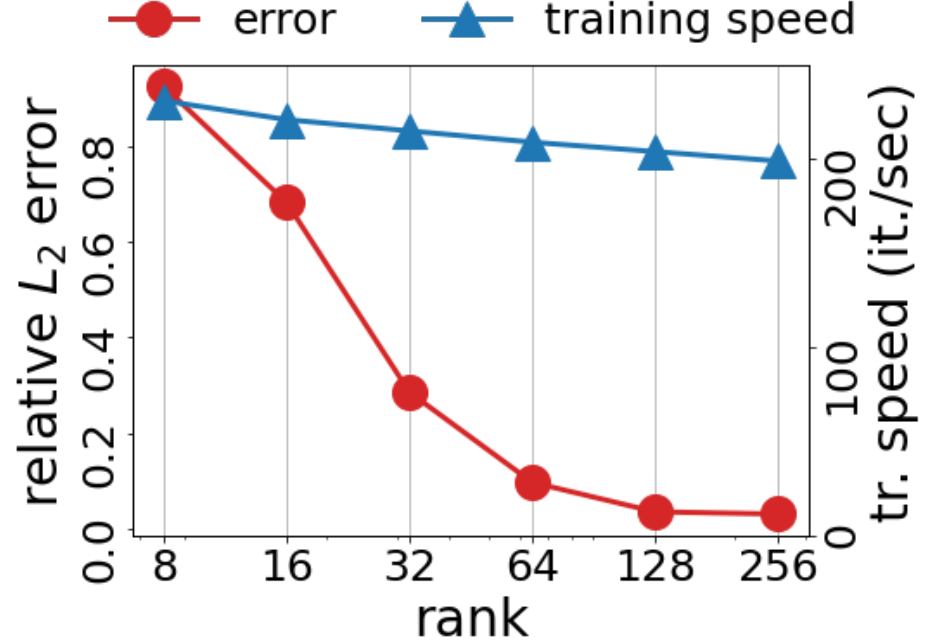}
  \vspace{-1.7em}
  \caption{Relative error and training speed of (2+1)-d Navier-Stokes experiment vs. different ranks of SPINN.}
  \vspace{-1em}
  %Note that both methods are uniform sampling after all.}
  \label{fig:ns_rank}
\end{wrapfigure}

\paragraph{(2+1)-d Navier-Stokes Equation}
% Various engineering fields rely on this equation, such as modeling the weather, airflow, or ocean currents.
% The vorticity form for incompressible fluid can be written as below:
% \begin{align}
%     \label{eq:navier_stokes}
%     &\partial_{t}\omega+u\cdot\nabla \omega=\nu\Delta \omega, &x\in\Omega, t\in\Gamma \\
%     \label{eq:incompressible_fluid}
%     &\nabla\cdot u=0, &x\in\Omega, t\in\Gamma \\
%     &\omega(x, 0)=\omega_0(x), &x\in\Omega,
% \end{align}
% where $u\in\mathbb{R}^2$ is the velocity field, $\omega=\nabla\times u$ is the vorticity, $\omega_0$ is the initial vorticity, and $\nu$ is the viscosity.
% We used the viscosity 0.01 and made the spatial/temporal domain $\Omega\in[0,2\pi]^2$ and $\Gamma\in[0,1]$, respectively.
% Note that Eq.~\ref{eq:incompressible_fluid} is the incompressible fluid condition.
% We constructed SPINN to predict the velocity field $u$ and applied forward-mode AD to obtain the vorticity $\omega$.
% Following causal PINN~\cite{wang2022respecting}, we adopted modified MLP~\cite{wang2021understanding}, time-marching method~\cite{krishnapriyan2021characterizing, wight2020solving} and applied positional encoding to the input points to ensure the exact periodic boundary condition~\cite{dong2021method}.
% Navier-Stokes equation is a nonlinear time-dependent PDE that describes the motion of a viscous fluid.
% Various engineering fields rely on this equation, such as modeling the weather, airflow, or ocean currents.
We constructed SPINN to predict the velocity field $u$ and applied forward-mode AD to obtain the vorticity $\omega$.
We used the same PDE setting used in causal PINN~\cite{wang2022respecting} and compared SPINN against their model.
% Since our experimental setting is identical to the one used in causal PINN~\cite{wang2022respecting}, we compared SPINN against their reported results.
As shown in Tab.~\ref{table:ns_result}, our model finds a comparably accurate solution even without the causal loss function.
Furthermore, since causal PINN had to iterate over four different tolerance values $\epsilon$ in the causal loss function, they had to run 3$\sim$4 times more training epochs, depending on the stopping criterion.
As a result, given the same number of collocation points, SPINN converges 60$\times$ faster than causal PINN in terms of training runtime.
The results of the Navier-Stokes equation ensure that SPINN can successfully solve a chaotic, highly nonlinear PDE.
In Fig.~\ref{fig:ns_rank}, we demonstrate the effect of the rank $r$ for solving the equation.
We observed that the performance almost converges at the rank of 128, and increasing the rank does not slow down the training speed too much.
Further details and visualization of the predicted vorticity map is shown in appendix section~\ref{sec:ns2d_supp}.

\begin{wraptable}{r}{6cm}
\footnotesize
\centering
\vspace{-1.5em}
\caption{The numerical result of (3+1)-d Navier-Stokes equation.
$N_c$ is the number of collocation points on the entire domain.
The relative errors are obtained by comparing the vorticity vector field and averaged over five different runs.
\\}
\begin{tabular}{@{\hskip 0.05in}c@{\hskip 0.05in}|@{\hskip 0.1in}c@{\hskip 0.1in}c@{\hskip 0.1in}c@{\hskip 0.1in}c}
    \toprule
    \multirow{2}{*}{model} & \multirow{2}{*}{$N_c$} & relative & runtime & memory \\
     & & $L_2$ error & (mins) & (MB) \\
    \midrule
    \multirow{3}{*}{SPINN} & $8^4$ & 0.0090 & 15.33 & 768 \\ 
     & $16^4$ & 0.0041 & 16.83 & 1,192 \\ 
     & $32^4$ & 0.0019 & 26.73 & 2,946 \\
    \bottomrule
\end{tabular}
\label{table:ns4d_result}
\vspace{-1.5em}
\end{wraptable}

\paragraph{(3+1)-d Navier-Stokes Equation}
% We proceeded to explore our model on (3+1)-d Navier-Stokes equation, where its vorticity form is given as:
% \begin{align}
%     \label{eq:navier_stokes4d}
%     &\partial_{t}\omega+(u\cdot\nabla) \omega=(\omega\cdot\nabla)u+\nu\Delta \omega+F, x\in\Omega, t\in\Gamma \\
%     \label{eq:incompressible_fluid4d}
%     &\nabla\cdot u=0, \qquad\qquad\qquad\qquad\qquad\quad\, x\in\Omega, t\in\Gamma \\
%     &\omega(x, 0)=\omega_0(x), \qquad\qquad\qquad\qquad\ \,x\in\Omega.
% \end{align}
% We constructed the spatial/temporal domain to be $\Omega\in[0, 2\pi]^3$ and $\Gamma\in[0, 5]$, respectively.

We proceeded to explore our model on (3+1)-d Navier-Stokes equation by devising a manufactured solution corresponding to the Taylor-Green vortex~\cite{taylor1937mechanism}.
It models a decaying vortex, widely used for testing Navier-Stokes numerical solvers~\cite{ethier1994exact}.
Since the vorticity of the (3+1)-d Navier-Stokes equation is a 3-d vector as opposed to the (2+1)-d system, the equation has three independent components resulting in a total of 33 derivative terms (see section~\ref{sec:ns3d_supp} in the appendix).
% In consequence, combining Eq.~\ref{eq:navier_stokes4d} and Eq.~\ref{eq:incompressible_fluid4d}, there is a total of 33 derivative terms in the complete PDE.
%This makes conventional PINN infeasible to solve this equation.
Similar to the (2+1)-d experiment, the network's output is the velocity field, and we obtained 3-d vorticity by forward-mode AD.
Tab.~\ref{table:ns4d_result} shows the relative error, runtime, and GPU memory.
Trained with a single GPU, SPINN achieves a relative error of 1.9e-3 less than 30 minutes.
Visualized velocity and vorticity vector fields are provided in the appendix.

\section{Limitations and Future Works}
\label{sec:limitation}
% Although our current work showed examples of a rectangular domain, there remains the potential to apply our model on complex geometries.
Although we presented comprehensive experimental results to show the effectiveness of our model, there remain questions on solving more challenging and higher dimensional PDEs.
Handling any geometric surface is one of the key advantages of PINNs.
% Handling any geometric surface is a key advantage of PINNs.
In the appendix, we presented a simple way to handle arbitrary domains with SPINN by testing on the Poisson equation in an L-shaped domain. Please see section~\ref{sec:poisson_supp} for more details.
Another method could be applying an additional operation after the SPINN's feature merging to map a rectangular mesh to an arbitrary physical mesh, inspired by PhyGeoNet~\cite{gao2021phygeonet} and Geo-FNO~\cite{li2022fourier}.
Combining such techniques for handling arbitrary boundary conditions of complex geometries into ours is an exciting research direction, and we leave it to future work.

We also found that due to its architectural characteristic, SPINN tends to be better trained when the solutions align with the variable separation form (discretized into low-rank tensor).
However, note that SPINN is still effective in solving equations where the solution is not a low-rank tensor, such as our examples on diffusion and (2+1)-d Navier-Stokes equation.
Please see section~\ref{sec:experiment_supp} in the appendix for additional experiments and information of which example aligns with the variable separation form or not.
Applying SPINN to higher dimensional PDEs, such as the BGK equation, would also have a tremendous practical impact.
Lastly, we are still far from the theoretical speed-up (wall-clock runtime vs. FLOPs), which is further discussed in appendix section~\ref{sec:flops_estimation}.
We expect to additionally reduce runtime by optimizing hardware/software, e.g., using customized CUDA kernels.

\section{Conclusion}
\vspace{-0.3em}
We showed a simple yet powerful method to employ large-scale collocation points for PINNs training, leveraging forward-mode AD with the separated MLPs.
% We presented a novel neural network architecture for PDE solvers, leveraging forward-mode AD with the separated MLPs.
Experimental results demonstrated that our method significantly reduces both spatial and computational complexity while achieving better accuracy on various three and four-dimensional PDEs.
To our knowledge, it is the first attempt to exploit the power of forward-mode AD in PINNs. We believe this work opens up a new direction to rethink the architectural design of neural networks in many scientific applications.

% \begin{ack}
% This research was supported by the Ministry of Science and ICT (MSIT) of Korea, under the National Research Foundation (NRF) grant (2022R1A4A3033571), Korea Institute of Energy Technology Evaluation and Planning (KETEP) and the Ministry of Trade, Industry \& Energy (MOTIE) of the Republic of Korea (No. 20224000000360). The research of Seok-Bae Yun was supported by Samsung Science and Technology Foundation under Project Number SSTF-BA1801-02.
% \end{ack}

\begin{ack}
This research was supported by the Ministry of Science and ICT (MSIT) of Korea, under the National Research Foundation (NRF) grant (2022R1A4A3033571), Institute of Information and Communication Technology Planning Evaluation (IITP) grants for the AI Graduate School program (IITP-2019-0-00421), Korea Institute of Energy Technology Evaluation and Planning (KETEP) and the Ministry of Trade, Industry \& Energy (MOTIE) of the Republic of Korea (No. 20224000000360).
\end{ack}

% \section{Supplementary Material}

% Authors may wish to optionally include extra information (complete proofs, additional experiments and plots) in the appendix. All such materials should be part of the supplemental material (submitted separately) and should NOT be included in the main submission.

% \section*{References}

% References follow the acknowledgments in the camera-ready paper. Use unnumbered first-level heading for
% the references. Any choice of citation style is acceptable as long as you are
% consistent. It is permissible to reduce the font size to \verb+small+ (9 point)
% when listing the references.
% Note that the Reference section does not count towards the page limit.
% \medskip

{
\small

\bibliographystyle{plain}
\bibliography{reference}
}

\clearpage
\appendix
\setcounter{figure}{0}
\setcounter{table}{0}
\setcounter{equation}{0}
\title{Separable PINN - Supplementary Materials}

\author{
}

\maketitleappendix

\vspace{-6em}

% \section{Pseudocode}
% \label{sec:pseudocode}
% Alg.~\ref{alg:feature_merging} shows the pseudocode of forward pass and feature merging step of SPINN when solving a 3-dimensional PDE.
% The outer product and summation between feature vectors can be easily implemented and parallelized by the \texttt{einsum} function.

% \begin{algorithm}[h]
%     \caption{JAX-style pseudocode for SPINN's forward pass and feature merging.}
%     \label{alg:feature_merging}
%     \begin{minted}[xleftmargin=1.8em, fontsize=\small, linenos]{python}
% import jax.numpy as jnp
% from flax import linen as nn

% class SPINN(nn.Module):
%     hidden_sizes: Sequence[int] # list of feature sizes for each layer

%     # forward pass and feature merging
%     @nn.compact
%     def forward(self, x, y, z): # 3-dimensional PDE
%         inputs = [x, y, z]   # full batch of the sampled coordinates
%         feats = []
%         for X in inputs:    # three individual MLPs for each input x, y, z
%             for hidden_size in self.hidden_sizes[:-1]:
%                 X = nn.Dense(hidden_size)(X)
%                 X = nn.activation.tanh(X)
%             """
%             the output feature size (self.hidden_sizes[-1]) corresponds to the 
%             rank of the predicted solution tensor (in case of 1-dimensional output)
%             """
%             X = nn.Dense(self.hidden_sizes[-1])(X)
%             feats += [X] # final feature representations
%         # feature merging (outer product between columns of feats)
%         xy = jnp.einsum('fx, fy->fxy', feats[:, 0], feats[:, 1])
%         return jnp.einsum('fxy, fz->xyz', xy, feats[:, 2])
%     \end{minted}
% \end{algorithm}

\section{Proof of Theorem 1}
\label{sec:proof}
Here we show the preliminary lemmas and proofs for Theorem 1. in the main paper.
We start by defining a general tensor product between two Hilbert spaces.\\

\begin{definition}
    Let $\{v_\beta\}$ be an orthonormal basis for $\mathcal{H}_2$.
    \textbf{Tensor product} between Hilbert spaces $\mathcal{H}_1$ and $\mathcal{H}_2$, denoted by $\mathcal{H}_1\otimes\mathcal{H}_2$, is a set of all antilinear mappings $A:\mathcal{H}_2\rightarrow\mathcal{H}_1$ such that $\sum_\beta\lVert Av_\beta\rVert^2<\infty$ for every orthonormal basis for $\mathcal{H}_2$.
    \label{definition1}
\end{definition}

Then by Theorem 7.12 in Folland~\citenew{folland2016course}, $\mathcal{H}_1\otimes\mathcal{H}_2$ is also a Hilbert space with respect to norm $\vertiii{\cdot}$ and associated inner product $\langle\cdot,\cdot\rangle$:
\begin{align}
    \vertiii{A}^2&\equiv\sum_\beta\lVert Av_\beta\rVert^2,\\
    \langle A, B\rangle&\equiv\sum_{\beta}\langle Av_\beta, Bv_\beta\rangle,\label{inner_product}
\end{align}
where $A, B\in\mathcal{H}_1\otimes\mathcal{H}_2$, and $\{v_\beta\}$ is any orthonormal basis of $\mathcal{H}_2$.\\

\begin{lemma}
    Let $x\in\mathcal{H}_1$ and $y, y'\in\mathcal{H}_2$. Then, $(x\otimes y)y'=\langle y,y'\rangle x$.
    \label{lemma1}
\end{lemma}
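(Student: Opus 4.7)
The plan is to unpack what the elementary tensor $x \otimes y$ means inside the space of antilinear maps from $\mathcal{H}_2$ to $\mathcal{H}_1$ described in Definition~\ref{definition1}, and then to verify that the formula in the lemma is precisely the one that makes $x \otimes y$ sit in $\mathcal{H}_1 \otimes \mathcal{H}_2$. Following Folland, I would take as the working definition
\begin{equation*}
    (x \otimes y)(z) \;:=\; \langle y, z \rangle \, x, \qquad z \in \mathcal{H}_2,
\end{equation*}
so that the identity $(x \otimes y) y' = \langle y, y' \rangle x$ is essentially a rewriting of the definition. The proof therefore reduces to showing that this $x \otimes y$ is a legitimate element of $\mathcal{H}_1 \otimes \mathcal{H}_2$, after which the lemma follows by evaluating at $y'$.

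First I would check antilinearity: using that the inner product on $\mathcal{H}_2$ is linear in its first slot and conjugate-linear in its second, for scalars $\alpha,\beta$ and $z_1,z_2 \in \mathcal{H}_2$,
\begin{equation*}
    (x \otimes y)(\alpha z_1 + \beta z_2) = \langle y, \alpha z_1 + \beta z_2 \rangle\, x = \overline{\alpha}\langle y, z_1\rangle x + \overline{\beta}\langle y, z_2\rangle x,
\end{equation*}
which is the required antilinearity. Next I would verify the summability condition $\sum_\beta \|(x \otimes y) v_\beta\|^2 < \infty$: for any orthonormal basis $\{v_\beta\}$ of $\mathcal{H}_2$,
\begin{equation*}
    \sum_\beta \bigl\|(x \otimes y)v_\beta\bigr\|^2 = \sum_\beta |\langle y, v_\beta \rangle|^2 \,\|x\|^2 = \|y\|^2 \|x\|^2 < \infty
\end{equation*}
by Parseval's identity, so $x \otimes y \in \mathcal{H}_1 \otimes \mathcal{H}_2$ in the sense of Definition~\ref{definition1}, and in fact $\vertiii{x \otimes y} = \|x\|\,\|y\|$.

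With $x \otimes y$ now identified as a concrete antilinear operator, evaluating at $y'$ gives $(x \otimes y) y' = \langle y, y'\rangle x$ directly, which is the claim. The only real obstacle is a notational one: the statement is essentially the defining formula for an elementary tensor, so one must be careful to state the convention (antilinearity in the second argument of the inner product, matching the antilinear mappings in Definition~\ref{definition1}) before writing down the identity. Once the convention is fixed, no further calculation is required beyond the two routine verifications above.
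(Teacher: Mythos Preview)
Your proof is correct and follows essentially the same approach as the paper: both define the map $A\colon z\mapsto\langle y,z\rangle x$, check antilinearity, verify the summability condition of Definition~\ref{definition1}, and conclude that $A=x\otimes y$. The only minor difference is that the paper normalizes to $\|y\|=1$ and picks a specific orthonormal basis containing $y$ so that all but one term in the sum vanishes, whereas you invoke Parseval directly for an arbitrary basis; your route is slightly cleaner and avoids the normalization assumption.
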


\begin{proof}
    Let $A$ be a mapping $A:y'\rightarrow\langle y, y'\rangle x$, where $\lVert y\rVert=1$.
    We expand $y'$ with orthonormal basis $\{y, z_\beta\}$. i.e., $\{y, z_\beta\}$ is a basis for $\mathcal{H}_2$.
    Then,
    \begin{align}
        \lVert Ay\rVert^2+\sum_\beta\lVert Az_\beta\rVert^2&=\lVert Ay\rVert^2+\sum_\beta\lVert\langle y,z_\beta\rangle x\rVert^2\\
        &=\lVert Ay\rVert^2<\infty
    \end{align}
    It is obvious that $A$ is antilinear.
    Then by Definition~\ref{definition1}, $A\in\mathcal{H}_1\otimes\mathcal{H}_2$ which is, $Ay'=(x\otimes y)y'$.
    Therefore, $(x\otimes y)y'=\langle y,y'\rangle x$.\\
\end{proof}

\begin{lemma}
    $\{u_\alpha\otimes v_\beta\}$ is an orthonormal basis for $\mathcal{H}_1\otimes\mathcal{H}_2$.
    \label{lemma2}
\end{lemma}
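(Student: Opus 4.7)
The plan is to verify the two defining properties of an orthonormal basis in a Hilbert space: orthonormality of the family $\{u_\alpha\otimes v_\beta\}$, and completeness (Parseval/density). Both will be reduced to direct computations using the inner product formula~(\ref{inner_product}) together with Lemma~\ref{lemma1}, which gives a clean formula for evaluating a simple tensor $x\otimes y$ on a basis element.

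First I would establish orthonormality. Fix indices $\alpha,\alpha',\beta,\beta'$ and compute
\begin{align}
    \langle u_\alpha\otimes v_\beta,\, u_{\alpha'}\otimes v_{\beta'}\rangle
    = \sum_\gamma \langle (u_\alpha\otimes v_\beta)v_\gamma,\, (u_{\alpha'}\otimes v_{\beta'})v_\gamma\rangle.
\end{align}
Applying Lemma~\ref{lemma1} to each factor reduces the inner summand to a product of scalars $\langle v_\beta,v_\gamma\rangle$ and $\langle v_{\beta'},v_\gamma\rangle$ multiplied by $\langle u_\alpha,u_{\alpha'}\rangle$. Since $\{v_\gamma\}$ is orthonormal in $\mathcal{H}_2$, only the terms with $\gamma=\beta=\beta'$ survive, leaving $\delta_{\beta\beta'}\delta_{\alpha\alpha'}$. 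This gives orthonormality in one short calculation.

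Next I would show completeness via Parseval's identity, which is cleaner than arguing by density. Given any $A\in\mathcal{H}_1\otimes\mathcal{H}_2$, a similar application of Lemma~\ref{lemma1} gives
\begin{align}
    \langle A,\, u_\alpha\otimes v_\beta\rangle
    = \sum_\gamma \langle Av_\gamma,\, \langle v_\beta,v_\gamma\rangle u_\alpha\rangle
    = \langle Av_\beta,\, u_\alpha\rangle.
\end{align}
Then, using that $\{u_\alpha\}$ is an orthonormal basis of $\mathcal{H}_1$ and expanding $\|Av_\beta\|^2$ by Parseval in $\mathcal{H}_1$, I obtain
\begin{align}
    \vertiii{A}^2 = \sum_\beta \|Av_\beta\|^2 = \sum_\beta\sum_\alpha |\langle Av_\beta, u_\alpha\rangle|^2 = \sum_{\alpha,\beta} |\langle A, u_\alpha\otimes v_\beta\rangle|^2.
\end{align}
This is Parseval's identity for $\{u_\alpha\otimes v_\beta\}$, which together with orthonormality implies it is a complete orthonormal basis of $\mathcal{H}_1\otimes\mathcal{H}_2$.

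The main subtlety I anticipate is the antilinearity convention in Definition~\ref{definition1}: since elements of $\mathcal{H}_1\otimes\mathcal{H}_2$ are antilinear maps $\mathcal{H}_2\to\mathcal{H}_1$, I must be careful about conjugates when pulling the scalars $\langle v_\beta,v_\gamma\rangle$ out of the inner product on $\mathcal{H}_1$, and I must verify that the sums appearing above are independent of the chosen basis $\{v_\gamma\}$ (as guaranteed by the well-definedness of $\vertiii{\cdot}$ and $\langle\cdot,\cdot\rangle$). Apart from bookkeeping these conjugates, the proof is a routine two-step verification and should not require additional structural ideas beyond Lemma~\ref{lemma1}.
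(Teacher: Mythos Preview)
Your proposal is correct and follows essentially the same route as the paper: both reduce $\langle A,\,u_\alpha\otimes v_\beta\rangle$ to $\langle Av_\beta,u_\alpha\rangle$ via Lemma~\ref{lemma1} and then invoke Parseval in $\mathcal{H}_1$ to obtain $\vertiii{A}^2=\sum_{\alpha,\beta}|\langle A,u_\alpha\otimes v_\beta\rangle|^2$. Your argument is in fact slightly more complete than the paper's, which checks only the Parseval identity and omits the explicit orthonormality computation you include.
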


\begin{proof}
    Let $A, B\in\mathcal{H}_1\otimes\mathcal{H}_2$, where $B=u_\alpha\otimes v_\beta$. Then by the definition of inner product in Eq.~\ref{inner_product}, 
    \begin{align}
        \langle A, B\rangle&=\sum_i\langle Av_i, Bv_i\rangle\\
        &=\sum_i\langle Av_i, (u_\alpha\otimes v_\beta) v_i\rangle\\
        &=\sum_i\langle Av_i, \langle v_\beta, v_i\rangle u_\alpha\rangle\qquad(\because\text{Lemma~\ref{lemma1})}\\
        &=\cancel{\langle Av_1, \langle v_\beta, v_1\rangle u_\alpha\rangle}+\cancel{\langle Av_2, \langle v_\beta, v_2\rangle u_\alpha\rangle}+\hdots+\langle Av_\beta, \langle v_\beta, v_\beta\rangle u_\alpha\rangle+\hdots\\
        &=\langle Av_\beta, u_\alpha\rangle
    \end{align}
    Now we check the Parseval identity:
    \begin{align}
        \sum_{\alpha, \beta}\left|\langle A, u_\alpha\otimes v_\beta\rangle\right|^2 &= \sum_{\alpha, \beta}\left|\langle Av_\beta, u_\alpha\rangle\right|^2 &(\because u_\alpha\otimes v_\beta=B)\\
        &=\sum_\beta\lVert Av_\beta\rVert^2\\
        &=\vertiii{A}^2.
    \end{align}
    $\therefore\{u_\alpha\otimes v_\beta\}$ is a basis.\\
\end{proof}

Now we begin the proof of Theorem 1. in the main paper.

\begin{theorem}
    Let $X, Y$ be compact subsets of $\mathbb{R}^d$.
    Choose $u \in L^2(X\times Y)$.
    Then, for arbitrary $\varepsilon>0$, we can find a sufficiently large $r>0$ and neural networks $f_j$ and $g_j$ such that
    \begin{align}
        \Big\|u-\sum_{j=1}^r f_j g_j\Big\|_{L^2(X\times Y)}<\varepsilon.
    \end{align}
    \label{theorem1_sup}
\end{theorem}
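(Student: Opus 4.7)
The plan is to combine Lemma~\ref{lemma2} with the classical universal approximation theorem. The overall strategy proceeds in three phases: identify $L^2(X\times Y)$ with the Hilbert tensor product $L^2(X)\otimes L^2(Y)$, truncate the resulting orthonormal expansion of $u$ to a finite rank, and then approximate each factor of each tensor product by a neural network.

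First I would fix orthonormal bases $\{\phi_\alpha\}\subset L^2(X)$ and $\{\psi_\beta\}\subset L^2(Y)$ consisting of continuous (hence bounded, since $X,Y$ are compact) functions, for instance polynomial or trigonometric bases after an affine change of variables. By Lemma~\ref{lemma2}, the family $\{\phi_\alpha\otimes\psi_\beta\}$ is then an orthonormal basis of $L^2(X)\otimes L^2(Y)$, which is canonically isometric to $L^2(X\times Y)$ via the identification $\phi\otimes\psi\mapsto\phi(x)\psi(y)$. Expanding $u=\sum_{\alpha,\beta}c_{\alpha\beta}\,\phi_\alpha\otimes\psi_\beta$ and applying Parseval, I can truncate to a finite sum so that $\bigl\|u-\sum_{j=1}^{r}\tilde{u}_j\otimes\tilde{v}_j\bigr\|_{L^2(X\times Y)}<\varepsilon/2$, where each $\tilde{u}_j$ (resp.\ $\tilde{v}_j$) is a scalar multiple of some basis element $\phi_{\alpha_j}$ (resp.\ $\psi_{\beta_j}$), and in particular each is continuous on the compact set $X$ (resp.\ $Y$).

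Next I would invoke the classical universal approximation theorem to produce, for any prescribed $\delta>0$, neural networks $f_j\colon X\to\mathbb{R}$ and $g_j\colon Y\to\mathbb{R}$ with $\|f_j-\tilde{u}_j\|_\infty<\delta$ and $\|g_j-\tilde{v}_j\|_\infty<\delta$. To convert these factor-wise errors into an error on the product, I would use the algebraic identity
\begin{align}
f_j g_j - \tilde{u}_j\tilde{v}_j = (f_j-\tilde{u}_j)\,g_j + \tilde{u}_j\,(g_j-\tilde{v}_j),
\end{align}
together with the elementary estimate $\|FG\|_{L^2(X\times Y)}\le |X|^{1/2}|Y|^{1/2}\,\|F\|_\infty\|G\|_\infty$, where $|X|$ and $|Y|$ denote Lebesgue measures. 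Choosing $\delta$ small enough (depending on $r$, on $\max_j\|\tilde{u}_j\|_\infty$ and $\max_j\|\tilde{v}_j\|_\infty$, and on $|X|,|Y|$) forces the accumulated error from the $r$ tensor terms to be less than $\varepsilon/2$, and a final triangle inequality delivers the theorem.

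The hard part will be the last step: controlling the product error uniformly across $j$. A naive $L^2$ universal approximation of each factor is insufficient, since an $L^2$ error on one factor can be amplified by an unbounded other factor. This is precisely why I would insist on choosing \emph{continuous} basis elements $\phi_\alpha,\psi_\beta$ and then approximate them in $L^\infty$ on the compact sets $X,Y$, where everything is bounded and uniform neural-network approximation is available. All remaining steps — the Parseval truncation, the identification $L^2(X\times Y)\cong L^2(X)\otimes L^2(Y)$, and the concluding triangle inequality — become routine once Lemma~\ref{lemma2} is invoked.
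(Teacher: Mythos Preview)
Your proposal is correct and follows the same three-phase strategy as the paper: invoke Lemma~\ref{lemma2} to expand $u$ in a tensor-product orthonormal basis of $L^2(X\times Y)$, truncate to a finite sum with error $\varepsilon/2$, and then replace each factor by a neural network via universal approximation. The only substantive difference is in how the third phase is executed. The paper approximates the basis functions $\phi_i,\psi_j$ in $L^2$ with geometrically decaying tolerances $\varepsilon/(3^j\|u\|_{L^2})$ and then controls the resulting cross terms through a Cauchy--Schwarz argument combined with Parseval's identity $\sum_{i,j}|a_{ij}|^2=\|u\|^2_{L^2}$. You instead insist from the outset on \emph{continuous} orthonormal bases and approximate each factor in $L^\infty$ on the compact sets $X,Y$; this reduces the product-error step to a one-line $\sup$-norm estimate via the identity $f_jg_j-\tilde u_j\tilde v_j=(f_j-\tilde u_j)g_j+\tilde u_j(g_j-\tilde v_j)$. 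Your route is more elementary and sidesteps the paper's Cauchy--Schwarz bookkeeping, at the mild cost of committing to a specific (continuous) choice of basis; the paper's route is basis-agnostic but requires more delicate estimation.
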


\begin{proof}
    Let $\{\phi_i\}$ and $\{\psi_j\}$ be orthonormal basis for $L^2(X)$ and $L^2(Y)$ respectively.
    Then $\{\phi_i\psi_j\}$ forms an orthonormal basis for $L^2(X\times Y)$ ($\because$ Lemma~\ref{lemma2}). Therefore, we can find a sufficiently large $r$ such that
    \begin{align}\label{app1}
        \Big\|u-\sum^r_{i,j}a_{ij}\phi_i\psi_j\Big\|_{L^2(X\times Y)}<\frac{\varepsilon}{2},
    \end{align}
    where $a_{ij}$ denotes 
    $$
    a_{ij}=\int_{X\times Y} u(x,y)\phi_i(x)\psi_j(y)dxdy.
    $$
    On the other hand, by the universal approximation theorem~\citenew{cybenko1989approximationsupp}, we can find neural networks $f_j$ and $g_j$ such that
    \begin{align}\label{UAT}
        \|\phi_i-f_j\|_{L^2(X)}\leq\frac{\varepsilon}{3^j\|u\|_{L^2(X\times Y)}}~\mbox{ and }~
        \|\psi_j-g_j\|_{L^2(Y)}\leq\frac{\varepsilon}{3^j\|u\|_{L^2(X\times Y)}}.
    \end{align}
    We first consider the difference between $u$ and $\sum_{i,j}^ra_{ij}f_ig_j$:
    \begin{align}\label{goal}
        \big\|u-&\sum_{i,j}^ra_{ij}f_ig_j\big\|_{L^2(X\times Y)} \\
        &\leq \Big\|u-\sum^r_{i,j}a_{ij}\phi_i\psi_j\Big\|_{L^2(X\times Y)}
        +\Big\|\sum^r_{i,j}a_{ij}\phi_i\psi_j-\sum^r_{i,j}a_{ij}f_ig_j\Big\|_{L^2(X\times Y)}\\
        &\equiv I+II
    \end{align}
    Since $|I|<\varepsilon/2$ from \eqref{app1}, it is enough to estimate $II$. For this, we consider
    \begin{align}
        \sum^r_{i,j}a_{ij}\phi_i\psi_j-\sum^r_{i,j}a_{ij}f_ig_j
        &= \sum^r_{i,j}a_{ij}\phi_i\big(\psi_j-g_j\big)
        +\sum^r_{i,j}a_{ij}\big(\phi_i-f_i\big)g_j\\
        &\equiv II_1+II_2.
    \end{align}
    We first compute $II_1$:
    \begin{align}
        \|II_1\|^2_{L^2(X\times Y)}&=\int_{X\times Y} \left\{\sum^r_{i,j}a_{ij}\phi_i\big(\psi_j-f_j\big)\right\}^2dxdy\\
        &=\int_{X\times Y} \left\{\sum^r_j\left(\sum^r_{i}a_{ij}\phi_i\right)(\psi_j-f_j\big)\right\}^2dxdy
    \end{align}
    We set
    \begin{align}
        A_j(x)=\sum^r_{i}a_{ij}\phi_i(x),\qquad B_j(y)=\psi_j(y)-g_j(y)
    \end{align}
    to write $II_1$ as
    \begin{align}
        \|II_1\|^2_{L^2(X\times Y)}&=\int_{X\times Y}\left\{\sum_{j}^r A_j(x)B_j(y)\right\}^2dxdy.
    \end{align}
    We then apply Cauchy-Scharwz inequality to get
    \begin{align}
        \|II_1\|^2_{L^2(X\times Y)}&\leq\int_{X\times Y}\Big(\sum_{j}^r |A_j(x)|^2\Big)\Big(\sum_j^r|B_j(y)|^2\Big)dxdy\\
        &=\left(\int_{X}\sum_{j}^r |A_j(x)|^2dx\right)\left(\int_Y\sum_j^r|B_j(y)|^2dy\right).
    \end{align}
    Now
    \begin{align}
        \int_{X}\sum_{j}^r |A_j(x)|^2dx&=\int_X\sum^r_j\left(\sum^r_ia_{ij}\phi_i\right)^2dx\\
        &=\sum_j^r\Big\|\sum_i^ra_{ij}\phi_i\Big\|^2_{L(X)}.
    \end{align}
    Since $\{\phi_i\}$ is an orthonormal basis, we see that
    \begin{align}
        \Big\|\sum_i^ra_{ij}\phi_i\Big\|_{L(X)}\leq \sum_i^r|a_{ij}|\big\|\phi_i\big\|_{L(X)}=\sum_i^r|a_{ij}|.
    \end{align}
    Therefore,
    \begin{align}
        \int_{X}\sum_{j}^r |A_j(x)|^2dx&=\int_X\sum_j^r\left(\sum_i^ra_{ij}\phi_i\right)^2dx\\
        &=\sum^r_j\int_X\left(\sum_i^ra_{ij}\phi_i\right)^2dx\\
        &=\sum_j^r\Big\|\sum_i^ra_{ij}\phi_i\Big\|^2_{L(X)}\\
        %&\leq\sum_j^M\left(\sum_i^M|a_{ij}|\|\phi_i\|_{L^(X)}\right)^2\\
        &=\sum_j^r\Big(\sum_i^r|a_{ij}|\Big)^2.
    \end{align}
    Finally, we recall
    \begin{align}
        \Big(\sum_i^r|a_{ij}|\Big)^2\leq 2\sum_i^r|a_{ij}|^2
    \end{align}
    to conclude
    \begin{align}
        \int_{X}\sum_{j}^r |A_j(x)|^2dx\leq2\sum_j^r\sum_i^r|a_{ij}|^2< 2\sum_{i,j}^{\infty}|a_{ij}|^2=2\|u\|^2_{L^2(X\times Y)}.
    \end{align}
    On the other hand, we have from \eqref{UAT}
    \begin{align}
        \int_Y\sum_j^r|B_j(y)|^2dy&=\sum_j^r\int_Y|\psi_j(y)-g_j(y)|^2dy\\
        &=\sum_j^r\big\|\psi_j(y)-g_j(y)\big\|^2_{L^2(Y)}\\
        &\leq\sum_j^r\frac{\varepsilon^2}{9^j\|u\|^2_{L^2(X\times Y)}}\\
        &<\frac{\varepsilon^2}{8\|u\|^2_{L^2(X\times Y)}}.
    \end{align}
    Hence we have
    \begin{align}
        \|II_1\|^2_{L^2(X\times Y)}<\frac{\varepsilon^2}{8\|u\|^2_{L^2(X\times Y)}}
        2\|u\|^2_{L^2(X\times Y)}=\frac{\varepsilon^2}{4}.
    \end{align}
    % \[
    % \|II_1\|^2_{L^2(X\times Y)}\textcolor{red}{<}\frac{\varepsilon^2}{8\|u\|^2_{L^2(X\times Y)}}
    % \textcolor{red}{2}\|u\|^2_{L^2(X\times Y)}\textcolor{red}{=}\textcolor{red}{\frac{\varepsilon^2}{4}}.
    % \]
    Likewise, 
    \begin{align}
        \|II_2\|^2_{L^2(X\times Y)}<\frac{\varepsilon^2}{4}.
    \end{align}
    % \textcolor{red}{\[
    % \|II_2\|^2_{L^2(X\times Y)}\leq 2\varepsilon^2\|u\|^2_{L^2(X\times Y)}\leq \frac{\varepsilon^2}{8}
    % \]}
    Therefore,
    \begin{align}\label{combine2}
        \|II\|^2_{L^2(X\times Y)}<\frac{\varepsilon^2}{2}.
    \end{align}
    We go back to \eqref{goal} with the estimates \eqref{app1} and \eqref{combine2} to derive
    \begin{align}
        \big\|u-\sum_{i,j}^ra_{ij}f_ig_j\big\|_{L^2(X\times Y)}
        <\frac{\varepsilon}{2}+\frac{\varepsilon}{2}=\varepsilon.
    \end{align}
    Finally, we reorder the index to rewrite
    \begin{align}
        \sum_{i,j}^ra_{ij}f_ig_j
        &=\sum_i^{\tilde r}b_{i}\tilde f_i\tilde g_i\\
        &=\sum_i^{\tilde r}\left\{b_{i}\tilde f_i\right\}\tilde g_i\\
        &=\sum_i^{\tilde r}k_i\tilde g_i
    \end{align}
    Without loss of generality, we rewrite  $\tilde r, k_i$, $\tilde g_i$ into $r, f_i$, $g_i$ respectively, to complete the proof.
\end{proof}

\section{Training with Physics-Informed Loss}
\label{sec:pinn_loss}
After SPINN predicts an output function with the methods described above, the rest of the training procedure follows the same process used in conventional PINN training~\citenew{raissi2019physicssupp}, except we use forward-mode AD to compute PDE residuals (standard back-propagation, a.k.a. reverse-mode AD for parameter updates). With the slight abuse of notation, our predicted solution function is denoted as $\hat{u}^{(\theta)}(x,t)$ from onwards, explicitly expressing time coordinates.
Given an underlying PDE (or ODE), the initial, and the boundary conditions, SPINN is trained with a `physics-informed' loss function:
\begin{gather}
% \begin{align}
    \min_{\theta}\mathcal{L}(\hat{u}^{(\theta)}(x,t))=\min_\theta\lambda_{\textrm{pde}}\mathcal{L}_{\textrm{pde}}+\lambda_{\textrm{ic}}\mathcal{L}_{\textrm{ic}}+\lambda_{\textrm{bc}}\mathcal{L}_{\textrm{bc}},
    \label{eq:objective_function}\\
    \mathcal{L}_{\textrm{pde}}=\int_\Gamma\int_\Omega\lVert\mathcal{N}[\hat{u}^{(\theta)}](x,t)\rVert^2dxdt,
    \label{eq:pde_loss}\\
    \mathcal{L}_{\textrm{ic}}=\int_\Omega\lVert \hat{u}^{(\theta)}(x,0)-u_{\textrm{ic}}(x)\rVert^2dx,\\
    \mathcal{L}_{\textrm{bc}}=\int_\Gamma\int_{\partial\Omega}\lVert\mathcal{B}[\hat{u}^{(\theta)}](x,t)-u_{\textrm{bc}}(x,t)\rVert^2dxdt,
% \end{align}
\end{gather}
where $\Omega$ is an input domain, $\mathcal{N}, \mathcal{B}$ are generic differential operators and $u_{\textrm{ic}}, u_{\textrm{bc}}$ are initial, boundary conditions, respectively.
$\lambda$ are weighting factors for each loss term.
When calculating the PDE loss ($\mathcal{L}_{\textrm{pde}}$) with Monte-Carlo integral approximation, we sampled collocation points from factorized coordinates and used forward-mode AD.
The remaining $\mathcal{L}_{\textrm{ic}}$ and $\mathcal{L}_{\textrm{bc}}$ are then computed with initial and boundary coordinates to regress the given conditions.
By minimizing the objective loss in Eq.~\ref{eq:objective_function}, the model output is enforced to satisfy the given equation, the initial, and the boundary conditions.

\section{FLOPs Estimation}
\label{sec:flops_estimation}
The FLOPs for evaluating the derivatives can be systematically calculated by disassembling the computational graph into elementary operations such as additions and multiplications.
Given a computational graph of forward pass for computing the primals, AD augments each elementary operation into other elementary operations.
The FLOPs in the forward pass can be precisely calculated since it consists of a series of matrix multiplications and additions.
We used the method described in~\citenew{griewank2008evaluatingsupp} to estimate FLOPs for evaluating the derivatives.
Table~\ref{tab:flops} shows the number of additions (\textsf{ADDS}) and multiplications (\textsf{MULTS}) in each evaluation process.
Note that FLOPs is a summation of \textsf{ADDS} and \textsf{MULTS} by definition.

One thing to note here is that this is a theoretical estimation.
Theoretically, the number of JVP evaluations for computing the gradient with respect to the input coordinates is $Nd$, when $N$ is the number of coordinates for each axis and $d$ is the input dimension (see section 4.3 in the main paper).
% This is because previously calculated feature vectors ($\boldsymbol{f^{(\theta_i)}}$) from forward pass are stored and usable in their current state.
However, our actual implementation of gradient calculation involves re-computing the feature representations $f^{(\theta_i)}$, which makes the complexity of network propagations from $\mathcal{O}(Nd)$ to $\mathcal{O}(Nd^2)$.
Ideally, these feature vectors can be computed only once and stored to be used later for gradient computations.
Although it is still significantly more efficient than the conventional PINN's complexity ($Nd^2\ll N^d$), there is a room to bridge the gap between theoretical FLOPs and actual training runtime by further software optimization.

\begin{table}[!htb]
\centering
\caption{The number of elementary operations for evaluating forward pass, first and second-order derivatives.
    The calculation is based on $64^3$ collocation points in a 3-d system and the vanilla MLP settings used for diffusion, Helmholtz, and Klein-Gordon equations.
    We assumed that each derivative is evaluated on every coordinate axis.
}
    \begin{tabular}{c|c|c|c|c}
        \toprule
         & \multicolumn{2}{c|}{SPINN (ours)} & \multicolumn{2}{c}{PINN (baseline)}\\
        \midrule
         & \textsf{ADDS} ($\times10^6$) & \textsf{MULTS} ($\times10^6$) & \textsf{ADDS} ($\times10^6$) & \textsf{MULTS} ($\times10^6$) \\ 
        \midrule
        forward pass & 20 & 20 & 21,609 & 21,609 \\
        1st-order derivative & 40 & 40 & 86,638 & 43,419 \\
        2nd-order derivative & 80 & 80 & 130,057 & 87,040 \\
        \midrule
        MFLOPs (total) & \multicolumn{2}{c|}{280} & \multicolumn{2}{c}{390,370} \\
        \bottomrule
    \end{tabular}
    \label{tab:flops}
\end{table}

\section{Experimental Details and Results}
\label{sec:experiment_supp}
In this section, we provide experimental details, numerical results, and visualizations for each experiment in the main paper.
Below, we show the list of examples where the solution aligns with the variable separation form or not.\\
\textit{separable form}: 3-d Helmholtz, (2+1)-d Klein-Gordon, (3+1)-d Klein-Gordon, (3+1)-d Navier-Stokes, (5+1)-d diffusion\\
\textit{non-separable form}: (2+1)-d diffusion, (2+1)-d Navier-Stokes, (2+1)-d flow mixing, 2-d L-shaped Poisson\\
% We tested on manufactured solutions because this is the most conventional way of making an exact reference solution.

\subsection{Diffusion Equation}
\label{sec:diffusion_supp}
The diffusion equation is one of the most representative parabolic PDEs, often used for modeling the heat diffusion process.
We especially choose a nonlinear diffusion equation where it can be written as:
\begin{align}
    &\partial_tu = \alpha\left(\lVert\nabla u\rVert^2 + u\Delta u\right), &x\in\Omega, t\in\Gamma,\label{eq:nonlinear_diffusion}\\
    &u(x, 0) = u_{\textrm{ic}}(x), &x\in\Omega,\\
    &u(x, t) = 0, &x\in\partial\Omega, t\in\Gamma.
\end{align}
We used diffusivity $\alpha=0.05$, spatial domain $\Omega=[-1,1]^2$, temporal domain $\Gamma=[0,1]$ and used superposition of three Gaussian functions for the initial condition $u_{\textrm{ic}}$.
We obtained the reference solution ($101\times 101\times 101$ resolution) through a widely-used PDE solver platform FEniCS~\citenew{logg2012automated}.
Note that FEniCS is a FEM-based solver.
We particularly set the initial condition to be a superposition of three gaussian functions:
\begin{align}
    u_{\textrm{ic}}(x, y) = 0.25\exp\left[-10\{(x-0.2)^2+(y-0.3)^2\}\right] &+ 0.4\exp\left[-15\{(x+0.1)^2+(y+0.5)^2\}\right] \nonumber\\ &+ 0.3\exp\left[-20\{(x+0.5)^2+y^2\}\right].
\end{align}

For our model, we used three body networks of 4 hidden layers with 64/32 hidden feature/output size each.
For the baseline model, we used a single MLP of 5 hidden layers with 128 hidden feature sizes.
We used Adam~\citenew{kingma2015adam} optimizer with a learning rate of 0.001 and trained for 50,000 iterations for every experiment.
All weight factors $\lambda$ in the loss function in Eq.~\ref{eq:objective_function} are set to 1.
The final reported errors are extracted where the total loss was minimum across the entire training iteration.
We also resampled the input points every 100 epochs.
Tab.~\ref{table:diffusion_result} shows the numerical results, and the visualized solutions are provided in Fig.~\ref{fig:diffusion_vis}.

\subsection{Helmholtz Equation}
\label{sec:helmholtz_supp}
The Helmholtz equation is a time-independent wave equation that takes the form:
\begin{align}
    &\Delta u +k^2u = q, &x\in\Omega,\\
    &u(x) = 0, &x\in\partial\Omega,
\end{align}
where the spatial domain is $\Omega=[-1,1]^3$.
For a given source term $q=-(a_1\pi)^2u-(a_2\pi)^2u-(a_3\pi)^2u+k^2u$, we devised a manufactured solution $u=\sin(a_1\pi x_1)\sin(a_2\pi x_2)\sin(a_3\pi x_3)$, where we take $k=1, a_1=4, a_2=4, a_3=3$.

For our model, we used three body networks of 4 hidden layers with 64/32 hidden feature/output size each.
For the baseline model, we used a single MLP of 5 hidden layers with 128 hidden feature sizes.
We used Adam~\citenew{kingma2015adam} optimizer with a learning rate of 0.001 and trained for 50,000 iterations for every experiment.
All weight factors $\lambda$ in the loss function in Eq.~\ref{eq:objective_function} are set to 1.
The final reported errors are extracted where the total loss was minimum across the entire training iteration.
We also resampled the input points every 100 epochs.
Tab.~\ref{table:helmholtz_result} shows the numerical results and the visualized solutions are provided in Fig.~\ref{fig:helmholtz_vis}.

\subsection{Klein-Gordon Equation}
\label{sec:klein_gordon_supp}
The Klein-Gordon equation is a nonlinear hyperbolic PDE, which arises in diverse applied physics for modeling relativistic wave propagation.
The inhomogeneous Klein-Gordon equation is given by
\begin{align}
    &\partial_{tt}u - \Delta u + u^2 = f, &x\in\Omega, t\in\Gamma, \\
    &u(x,0)=x_1+x_2, &x\in\Omega, \\
    &u(x,t)=u_{\textrm{bc}}(x), &x\in\partial\Omega, t\in\Gamma,
\end{align}
where we chose the spatial/temporal domain to be $\Omega=[-1,1]^2$ and $\Gamma=[0,10]$, respectively.
For error measurement, we used a manufactured solution $u=(x_1+x_2)\cos(2t)+x_1x_2\sin(2t)$ and $f$, $u_{bc}$ are extracted from this exact solution.

For our model, we used three body networks of 4 hidden layers with 64/32 hidden feature/output size each.
For the baseline model, we used a single MLP of 5 hidden layers with 128 hidden feature sizes.
We used Adam~\citenew{kingma2015adam} optimizer with a learning rate of 0.001 and trained for 50,000 iterations for every experiment.
All weight factors $\lambda$ in the loss function in Eq.~\ref{eq:objective_function} are set to 1.
The final reported errors are extracted where the total loss was minimum across the entire training iteration.
We also resampled the input points every 100 epochs.
Tab.~\ref{table:klein_gordon_result} shows the numerical results.

We used the same settings used in (2+1)-d Klein-Gordon experiment for the (3+1)-d experiment except the manufactured solution was chosen as:
\begin{align}
    u=(x_1+x_2+x_3)\cos{(t)}+x_1x_2x_3\sin{(t)},
\end{align}
where $f$, $u_{bc}$ are extracted from this exact solution.
Tab.~\ref{table:klein_gordon4d_result} shows the numerical results.
The number of collocation points of $23^4, 18^4$ were the maximum value for PINN and PINN with modified MLP, respectively.

\subsection{(2+1)-d Navier-Stokes Equation}
\label{sec:ns2d_supp}
Navier-Stokes equation is a nonlinear time-dependent PDE that describes the motion of a viscous fluid.
Various engineering fields rely on this equation, such as modeling the weather, airflow, or ocean currents.
The vorticity form for incompressible fluid can be written as below:
\begin{align}
    \label{eq:navier_stokes}
    &\partial_{t}\omega+u\cdot\nabla \omega=\nu\Delta \omega, &x\in\Omega, t\in\Gamma, \\
    \label{eq:incompressible_fluid}
    &\nabla\cdot u=0, &x\in\Omega, t\in\Gamma, \\
    &\omega(x, 0)=\omega_0(x), &x\in\Omega,
\end{align}
where $u\in\mathbb{R}^2$ is the velocity field, $\omega=\nabla\times u$ is the vorticity, $\omega_0$ is the initial vorticity, and $\nu$ is the viscosity.
We used the viscosity 0.01 and made the spatial/temporal domain $\Omega=[0,2\pi]^2$ and $\Gamma=[0,1]$, respectively.
Note that Eq.~\ref{eq:navier_stokes} models decaying turbulence since there is no forcing term and Eq.~\ref{eq:incompressible_fluid} is the incompressible fluid condition.
The reference solution is generated by JAX-CFD solver~\citenew{kochkov2021machine} which specifically used the pseudo-spectral method.
The initial condition was generated using the gaussian random field with a maximum velocity of 5.
The resolution of the obtained solution is $100\times 128\times 128$ $(N_t\times N_x\times N_y)$, and we tested our model on this data.

For our model, we used three body networks (modified MLP) of 3 hidden layers with 128/256 hidden feature/output sizes each.
We divided the temporal domain into ten time windows to adopt the time marching method~\citenew{krishnapriyan2021characterizingsupp, wight2020solving}.
We used Adam~\citenew{kingma2015adam} optimizer with a learning rate of 0.002 and each time window is trained for 100,000 iterations.
Followed by causal PINN, the PDE (residual) loss and initial condition loss function are written as follows.
\begin{align}
    \mathcal{L}_{\textrm{pde}}&=\frac{\lambda_w}{N_c}\sum^{N_c}|\partial_t w + u_x\partial_x w + u_y\partial_y w - \nu(\partial_{xx} w + \partial_{yy} w)|^2 + \frac{\lambda_c}{N_c}\sum^{N_c}|\partial_x u_x + \partial_y u_y|^2, \\
    \mathcal{L}_{\textrm{ic}}&=\frac{\lambda_{ic}}{N_{ic}}\sum^{N_{ic}}\left(|u_x-u_{x0}|^2 + |u_y-u_{y0}|^2 + |w-w_0|^2\right),
\end{align}
where $u_x, u_y$ are $x, y$ components of predicted velocity, $w=\partial_x u_y - \partial_y u_x$, $N_c$ is the number of collocation points, $N_{ic}$ is the number of coordinates for initial condition and $u_{x0}, u_{y0}, u_{w0}$ are the initial conditions.
We chose the weighting factors $\lambda_w = 1$, $\lambda_c = 5,000$, and $\lambda_{ic} = 10,000$.
We also resampled the input points every 100 epochs.
The periodic boundary condition can be explicitly enforced by positional encoding~\citenew{dong2021method}, and we specifically used the following encoding function only for the spatial input coordinates.
\begin{align}
    \gamma(x)=[1, \sin(x), \sin(2x), \sin(3x), \sin(4x), \sin(5x), \cos(x), \cos(2x), \cos(3x), \cos(4x), \cos(5x)]^\top.
\end{align}

Unlike other experiments, the solution of the Navier-Stokes equation is a 2-dimensional vector-valued function $u:\mathbb{R}^3\rightarrow\mathbb{R}^2$.
We can rewrite the feature merging equation Eq.~\ref{eq:feature_merge} in the main paper to construct SPINN into a 2-dimensional vector function:
\begin{align}
    u_1 &= \sum_{j=1}^{r}\prod_{i=1}^{d}f^{(\theta_i)}_j(x_i), \\
    u_2 &= \sum_{j=r+1}^{2r}\prod_{i=1}^{d}f^{(\theta_i)}_j(x_i).
\end{align}
For example in our (2+1)-d Navier-Stokes equation setting, $r=128$ since the network output feature size is 256.
This can be applied to any $m$-dimensional vector function if we use a larger output feature size.
More formally, if we want to construct an $m$-dimensional vector function with SPINN of rank $r$, the $k$-th element of the function output can be written as
\begin{align}
    u_k &= \sum_{j=(k-1)r+1}^{kr}\prod_{i=1}^{d}f^{(\theta_i)}_j(x_i),
\end{align}
where the output feature size of each body network is $mr$.

\subsection{(3+1)-d Navier-Stokes Equation}
\label{sec:ns3d_supp}
The vorticity form of (3+1)-d Navier-Stokes equation is given as:
\begin{align}
    \label{eq:navier_stokes4d}
    &\partial_{t}\omega+(u\cdot\nabla) \omega=(\omega\cdot\nabla)u+\nu\Delta \omega+F, &x\in\Omega, t\in\Gamma, \\
    \label{eq:incompressible_fluid4d}
    &\nabla\cdot u=0, \qquad\qquad\qquad\qquad\qquad\quad\, &x\in\Omega, t\in\Gamma, \\
    &\omega(x, 0)=\omega_0(x), \qquad\qquad\qquad\qquad\ \,&x\in\Omega.
\end{align}
We constructed the spatial/temporal domain to be $\Omega=[0, 2\pi]^3$ and $\Gamma=[0, 5]$, respectively.
We constructed the analytic solution for (3+1)-d Navier-Stokes equation introduced in Taylor et al.~\citenew{taylor1937mechanismsupp}.
The manufactured velocity and vorticity are
\begin{align}
    u_x &= 2e^{-9\nu t}\cos(2x)\sin(2y)\sin(z), \\
    u_y &= -e^{-9\nu t}\sin(2x)\cos(2y)\sin(z), \\
    u_z &= -2e^{-9\nu t}\sin(2x)\sin(2y)\cos(z), \\
    \omega_x &= -3e^{-9\nu t}\sin(2x)\cos(2y)\cos(z), \\
    \omega_y &= 6e^{-9\nu t}\cos(2x)\sin(2y)\cos(z), \\
    \omega_z &= -6e^{-9\nu t}\cos(2x)\cos(2y)\sin(z),
\end{align}
where we chose the viscosity to be $\nu=0.05$.
Each forcing term $F$ in the Eq.~\ref{eq:navier_stokes4d} is then given as
\begin{align}
    F_x &= -6e^{-18\nu t}\sin(4y)\sin(2z), \\
    F_y &= -6e^{-18\nu t}\sin(4x)\sin(2z), \\
    F_z &= 6e^{-18\nu t}\sin(4x)\sin(4y).
\end{align}

We constructed SPINN to be four body networks (modified MLP) of 5 hidden layers with 64/384 hidden feature/output sizes each.
We used Adam~\citenew{kingma2015adam} optimizer with a learning rate of 0.001 and trained for 50,000 iterations.
The weight factors in the loss function in Eq.~\ref{eq:pde_loss} are chosen as $\lambda_{\textrm{pde}}=1$, $\lambda_{\textrm{ic}}=10$, and $\lambda_{\textrm{bc}}=1$.
We also weighted the incompressibility loss (Eq.~\ref{eq:incompressible_fluid4d}) with 100.
The visualized solution vector field is shown in Fig.~\ref{fig:ns4d_vis}.

\section{Additional Experiments}
\subsection{(5+1)-d diffusion Equation}

We tested our model on the (5+1)-d diffusion equation to verify the effectiveness of our model on higher dimensional PDE:
\begin{align}
    \frac{\partial u(t,x)}{\partial t} = \Delta u(t,x),\qquad\qquad x\in[-1, 1]^5, t\in[0, 1],
\end{align}
where the manufactured solution is chosen to be $\lVert x\rVert^2+10t$.
When trained with $8^6$ collocation points, SPINN achieved a relative $L_2$ error of 0.0074 within 2 minutes.

\subsection{(2+1)-d Flow Mixing Problem}
This is a time dependent PDE which describes the behavior of two fluids being mixed at the interface in a 2-d environment.
Following the settings in CAN-PINNs~\citenew{chiu2022can_supp}, the equation is
\begin{align}
    \frac{\partial u(t, x, y)}{\partial t} + a\frac{\partial u(t, x, y)}{\partial x} + b\frac{\partial u(t, x, y)}{\partial y} = 0,\qquad t\in[0, 4], x\in[-4, 4], y\in[-4, 4],
\end{align}
where
\begin{align}
    &a(x, y) = -\frac{v_t}{v_{t, \text{max}}}\frac{y}{r},\\
    &b(x, y) = \frac{v_t}{v_{t, \text{max}}}\frac{x}{r},\\
    &v_t = \text{sech}^2(r)\text{tanh}(r),\\
    &r = \sqrt{x^2+y^2},\\
    &v_{t, \text{max}} = 0.385.
\end{align}
It has an analytic solution~\citenew{tamamidis1993evaluation}:
\begin{align}
    \label{eq:flow_mixing_solution}
    u(t, x, y)=-\text{tanh}\left(\frac{y}{2}\cos({\omega t}) - \frac{x}{2}\sin({\omega t})\right),
\end{align}
where $\omega=\frac{1}{r}\frac{v_t}{v_{t, \text{max}}}$.
The initial and boundary conditions are extracted from the exact solution Eq.~\ref{eq:flow_mixing_solution}.
When trained with $256^3$ collocation points, SPINN achieved a relative $L_2$ error of 0.0029 in 6 minutes. Figure~\ref{fig:flow_mixing_vis} shows the visualized result.

\begin{figure*}[ht]
\centering
  \includegraphics[width=0.7\textwidth]{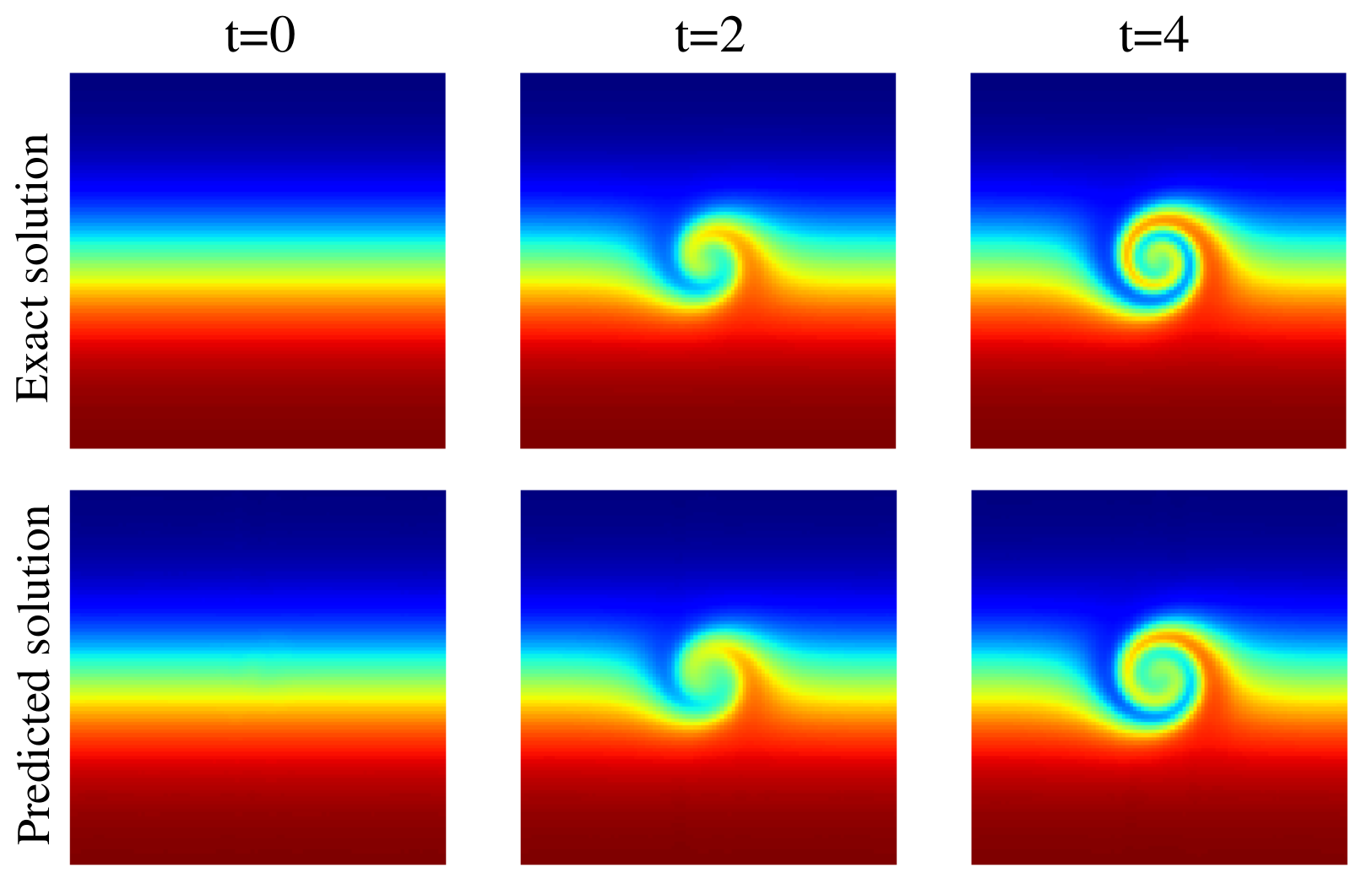}
  \caption{Visualized solution of \textbf{flow mixing problem} obtained by SPINN.
  }
  \label{fig:flow_mixing_vis}
\end{figure*}

\subsection{2-d Poisson equation on an L-shaped domain}
\label{sec:poisson_supp}
Besides the coordinate or mesh transformations~\citenew{gao2021phygeonet_supp,li2022fourier_supp} mentioned in the limitations section~\ref{sec:limitation}, we would like to present yet another simpler way to handle arbitrary domain shapes with SPINN. For boundary points, we train in a non-separable way, just as normal PINN training (non-factorizable coordinates).
For collocation points inside the domain, we train in a separable way (factorizable coordinates) and ignore the points outside the domain.
This can be easily achieved by finding a tight bounding box of the input domain and masking out the PDE loss when the coordinates are outside.

To show the effectiveness, we tested SPINN on the Poisson equation on an L-shaped domain following the settings used in DeepXDE~\citenew{lu2021deepxde}:
\begin{align}
    -\Delta u(x, y)&=1,  \quad (x, y)\in\Omega,\\
    u(x, y)&=0, \quad (x, y)\in\partial\Omega,
\end{align}
where $\Omega=[-1, 1]^2\setminus[0,1]^2$.
The reference solution was obtained by the spectral method. SPINN achieved a relative $L_2$ error of 0.0322, while PINNs with modified MLP achieved 0.0392. Figure~\ref{fig:poisson_vis} shows the visualized result.
For practical reasons, we split the L-shaped domain into two rectangular domains instead of masking out the top-right part of the domain.
The L-shaped domain we showed is just an example case, and we believe that this method can be applied to any arbitrary input domain.

\begin{figure*}[ht]
\centering
  \includegraphics[width=\textwidth]{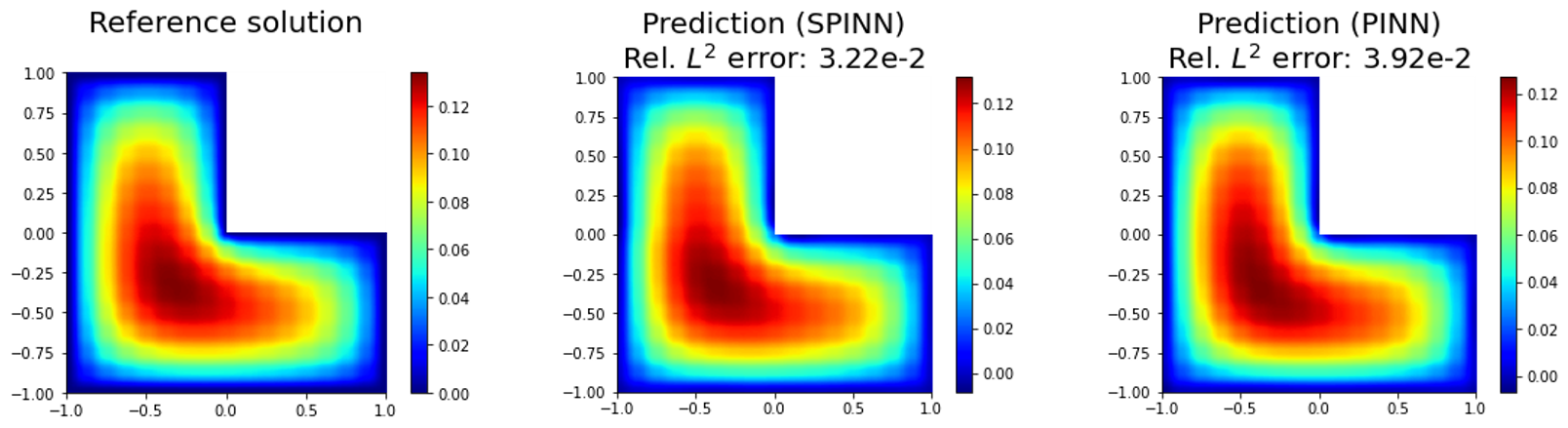}
  \caption{Visualized solution of \textbf{Poisson equation} obtained by SPINN and PINNs with modified MLP.
  }
  \label{fig:poisson_vis}
\end{figure*}

\subsection{Fine Tuning with L-BFGS}
We also conducted some experiments to explore the use of L-BFGS when training SPINN.
We found that training with Adam first and then fine-tuning with L-BFGS showed a slight increase in accuracy.
Note that this training strategy is used by other works~\citenew{jin2021nsfnets, meng2020ppinn} and is known to be effective in some cases.
Tab.~\ref{table:lbfgs} shows the numerical results on three 3-d PDEs.
Understanding the effect of the optimization algorithm is still an open question in PINNs, we believe that investigating this issue in the context of SPINN would be a valuable direction for future study.

\begin{center}
\begin{table}[ht]
\centering
\caption{Numerical result of 3-d PDEs with L-BFGS fine-tuning.
The number of training collocation points is $64^3$ and a single outer loop L-BFGS is applied.
}
\begin{tabular}{c|ccc}
    \toprule
     & Diffusion & Helmholtz & Klein-Gordon \\
    \midrule
    Adam & 0.0041 & 0.0360 & 0.0013 \\
    Adam + L-BFGS & 0.0041 & 0.0308 & 0.0010 \\
    \bottomrule
\end{tabular}
\label{table:lbfgs}
\end{table}
\end{center}

%%%%%%%%%%%%%%%%%%%%%%%%% TABLES %%%%%%%%%%%%%%%%%%%%%%%%%

\begin{table}[ht]
    \centering
    \caption{Full results of \textbf{diffusion equation}.
    $N_c$ is the number of collocation points.
    }
    \begin{tabular}{c|ccccc}
        \toprule
        \multirow{2}{*}{model} & \multirow{2}{*}{$N_c$} & relative & \multirow{2}{*}{RMSE} & runtime & memory \\
         & & $L_2$ error & & (ms/iter.) & (MB) \\
        \midrule
        \multirow{3}{*}{PINN} & $16^3$ & 0.0095 & 0.00082 & 3.98 & 1,022 \\ 
         & $32^3$ & 0.0082 & 0.00071 & 12.82 & 2,942 \\
         & $64^3$ & 0.0081 & 0.00070 & 95.22 & 18,122 \\
         \midrule
        \multirow{3}{*}{PINN + modified MLP} & $16^3$ & 0.0048 & 0.00042 & 14.94 & 1,918 \\ 
         & $32^3$ & 0.0043 & 0.00037 & 29.91 & 4,990 \\
         & $54^3$ & 0.0041 & 0.00036 & 134.64 & 22,248 \\
         \midrule
         \multirow{5}{*}{SPINN} & $16^3$ & 0.0447 & 0.00387 & 1.45 & 766 \\ 
         & $32^3$ & 0.0115 & 0.00100 & 1.76 & 766 \\
         & $64^3$ & 0.0075 & 0.00065 & 1.90 & 766 \\
         & $128^3$ & 0.0061 & 0.00053 & 2.09 & 894 \\
         & $256^3$ & 0.0061 & 0.00053 & 10.54 & 2,174 \\
         \midrule
         \multirow{5}{*}{SPINN + modified MLP} & $16^3$ & 0.0390 & 0.00338 & 2.17 & 766 \\ 
          & $32^3$ & 0.0067 & 0.00058 & 2.44 & 768 \\
          & $64^3$ & 0.0041 & 0.00036 & 2.59 & 768 \\
          & $128^3$ & \textbf{0.0036} & 0.00031 & 3.06 & 896 \\
          & $256^3$ & \textbf{0.0036} & 0.00031 & 12.13 & 2,176 \\
        \bottomrule
    \end{tabular}
    \label{table:diffusion_result}
\end{table}

\begin{table}[ht]
    \centering
    \caption{Full results of the \textbf{Helmholtz equation}.
    $N_c$ is the number of collocation points.
    }
    \begin{tabular}{c|ccccc}
        \toprule
        \multirow{2}{*}{model} & \multirow{2}{*}{$N_c$} & relative & \multirow{2}{*}{RMSE} & runtime & memory \\
         & & $L_2$ error & & (ms/iter.) & (MB) \\
        \midrule
        \multirow{3}{*}{PINN} & $16^3$ & 0.9819 & 0.3420 & 4.84 & 2,810 \\ 
         & $32^3$ & 0.9757 & 0.3398 & 14.84 & 2,938 \\
         & $64^3$ & 0.9723 & 0.3386 & 110.23 & 18,118 \\
         \midrule
        \multirow{3}{*}{PINN + modified MLP} & $16^3$ & 0.4770 & 0.1661 & 18.32 & 7,034 \\ 
         & $32^3$ & 0.5176 & 0.1803 & 35.02 & 9,082 \\
         & $54^3$ & 0.4770 & 0.1661 & 159.90 & 22,244 \\
         \midrule
         \multirow{5}{*}{SPINN} & $16^3$ & 0.1177 & 0.0410 & 1.54 & 762 \\ 
         & $32^3$ & 0.0809 & 0.0282 & 1.71 & 762 \\
         & $64^3$ & 0.0592 & 0.0206 & 1.85 & 762 \\
         & $128^3$ & 0.0449 & 0.0156 & 1.89 & 762 \\
         & $256^3$ & 0.0435 & 0.0151 & 3.84 & 1,146 \\
         \midrule
         \multirow{5}{*}{SPINN + modified MLP} & $16^3$ & 0.1161 & 0.0404 & 2.24 & 764 \\ 
         & $32^3$ & 0.0595 & 0.0207 & 2.50 & 764 \\
         & $64^3$ & 0.0360 & 0.0125 & 2.57 & 764 \\
         & $128^3$ & \textbf{0.0300} & 0.0104 & 2.76 & 764 \\
         & $256^3$ & 0.0311 & 0.0108 & 5.50 & 1,148 \\
        \bottomrule
    \end{tabular}
    \label{table:helmholtz_result}
\end{table}

\begin{table}[t]
    \centering
    \caption{Full results of the \textbf{(2+1)-d Klein-Gordon equation}.
    $N_c$ is the number of collocation points.}
    \begin{tabular}{c|ccccc}
        \toprule
        \multirow{2}{*}{model} & \multirow{2}{*}{$N_c$} & relative & \multirow{2}{*}{RMSE} & runtime & memory \\
         & & $L_2$ error & & (ms/iter.) & (MB) \\
        \midrule
        \multirow{3}{*}{PINN} & $16^3$ & 0.0343 & 0.0218 & 4.70 & 2,810 \\ 
         & $32^3$ & 0.0281 & 0.0178 & 14.95 & 2,938 \\
         & $64^3$ & 0.0299 & 0.0190 & 112.00 & 18,118 \\
         \midrule
        \multirow{3}{*}{PINN + modified MLP} & $16^3$ & 0.0158 & 0.0100 & 17.87 & 7,036 \\ 
         & $32^3$ & 0.0185 & 0.0118 & 34.61 & 9,082 \\
         & $54^3$ & 0.0163 & 0.0104 & 159.20 & 22,246 \\
         \midrule
        \multirow{5}{*}{SPINN} & $16^3$ & 0.0193 & 0.0123 & 1.55 & 762 \\ 
         & $32^3$ & 0.0060 & 0.0038 & 1.71 & 762 \\
         & $64^3$ & 0.0045 & 0.0029 & 1.82 & 762 \\
         & $128^3$ & 0.0040 & 0.0025 & 1.85 & 890 \\
         & $256^3$ & 0.0039 & 0.0025 & 3.98 & 1,658 \\
         \midrule
        \multirow{5}{*}{SPINN + modified MLP} & $16^3$ & 0.0062 & 0.0039 & 2.20 & 764 \\ 
         & $32^3$ & 0.0020 & 0.0013 & 2.41 & 764 \\
         & $64^3$ & 0.0013 & 0.0008 & 2.57 & 764 \\
         & $128^3$ & \textbf{0.0008} & 0.0005 & 2.79 & 892 \\
         & $256^3$ & 0.0009 & 0.0006 & 5.61 & 1,660 \\
        \bottomrule
    \end{tabular}
    \label{table:klein_gordon_result}
\end{table}

\begin{table}
    \centering
    \caption{Full results of the \textbf{(3+1)-d Klein-Gordon equation}.
    $N_c$ is the number of collocation points.}
    \begin{tabular}{c|ccccc}
        \toprule
        \multirow{2}{*}{model} & \multirow{2}{*}{$N_c$} & relative & \multirow{2}{*}{RMSE} & runtime & memory \\
         & & $L_2$ error & & (ms/iter.) & (MB) \\
        \midrule
        \multirow{2}{*}{PINN} & $16^4$ & 0.0129 & 0.0096 & 43.51 & 5,246 \\ 
         & $23^4$ & 0.0121 & 0.0090 & 154.24 & 22,244 \\
         \midrule
        \multirow{2}{*}{PINN + modified MLP} & $16^4$ & 0.0061 & 0.0045 & 100.06 & 17,534 \\ 
         & $18^4$ & 0.0059 & 0.0044 & 174.00 & 22,246 \\
         \midrule
        \multirow{3}{*}{SPINN} & $16^4$ & 0.0122 & 0.0091 & 2.45 & 890 \\ 
         & $32^4$ & 0.0095 & 0.0071 & 2.98 & 892 \\
         & $64^4$ & 0.0093 & 0.0069 & 9.22 & 2,172 \\
         \midrule
        \multirow{3}{*}{SPINN + modified MLP} & $16^4$ & 0.0064 & 0.0048 & 3.48 & 892 \\
         & $32^4$ & 0.0022 & 0.0016 & 3.66 & 892 \\
         & $64^4$ & \textbf{0.0012} & 0.0009 & 10.96 & 2,172 \\
        \bottomrule
    \end{tabular}
    \label{table:klein_gordon4d_result}
\end{table}

% \clearpage

\begin{figure*}[ht]
\centering
  \includegraphics[height=\textheight-5em]{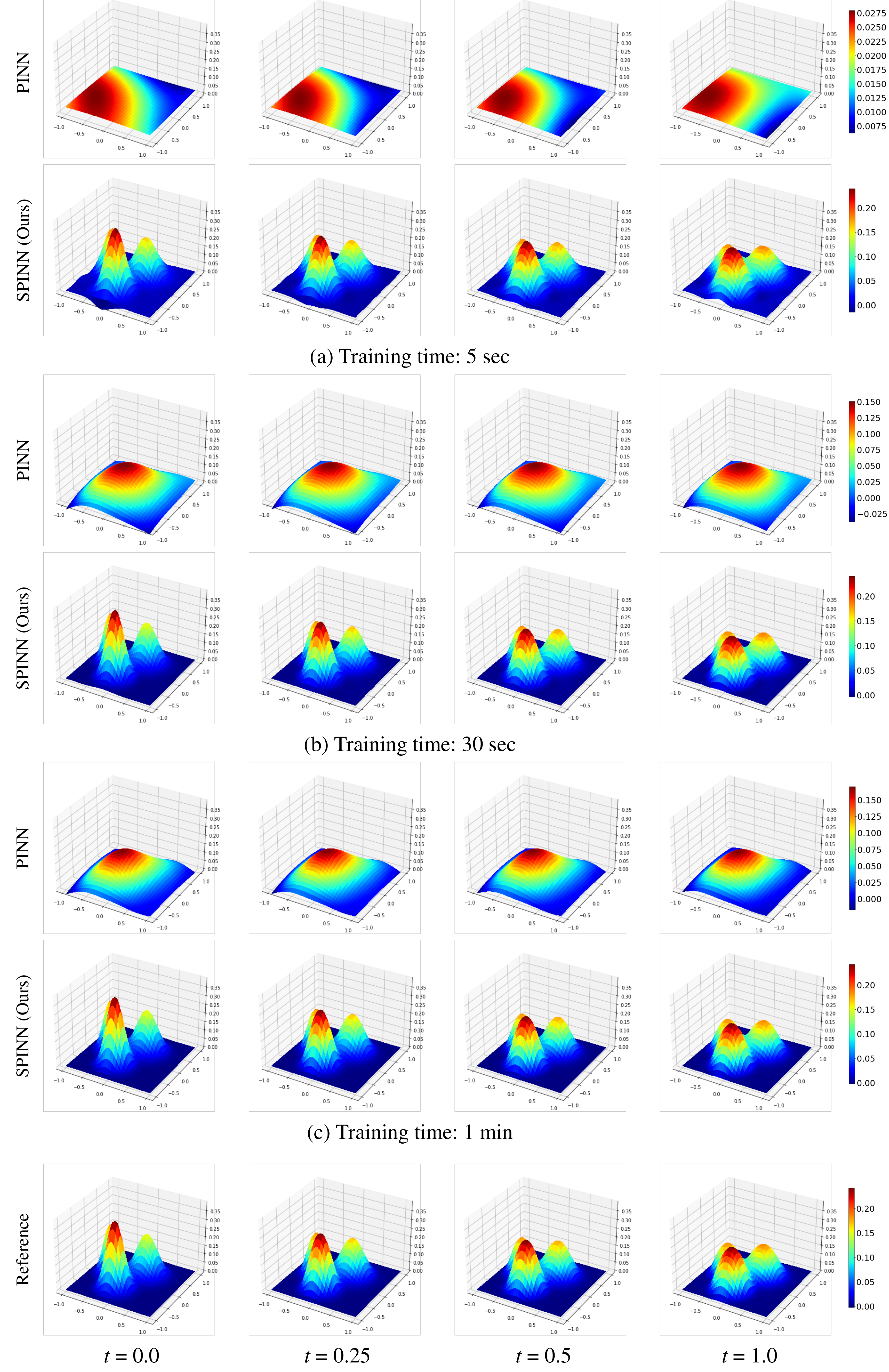}
  \caption{Visualized solution of \textbf{nonlinear diffusion equation} obtained by the baseline PINN and SPINN, both trained on $64^3$ collocation points.
  }
  \label{fig:diffusion_vis}
\end{figure*}

\begin{figure*}[ht]
\centering
  \includegraphics[width=\textwidth]{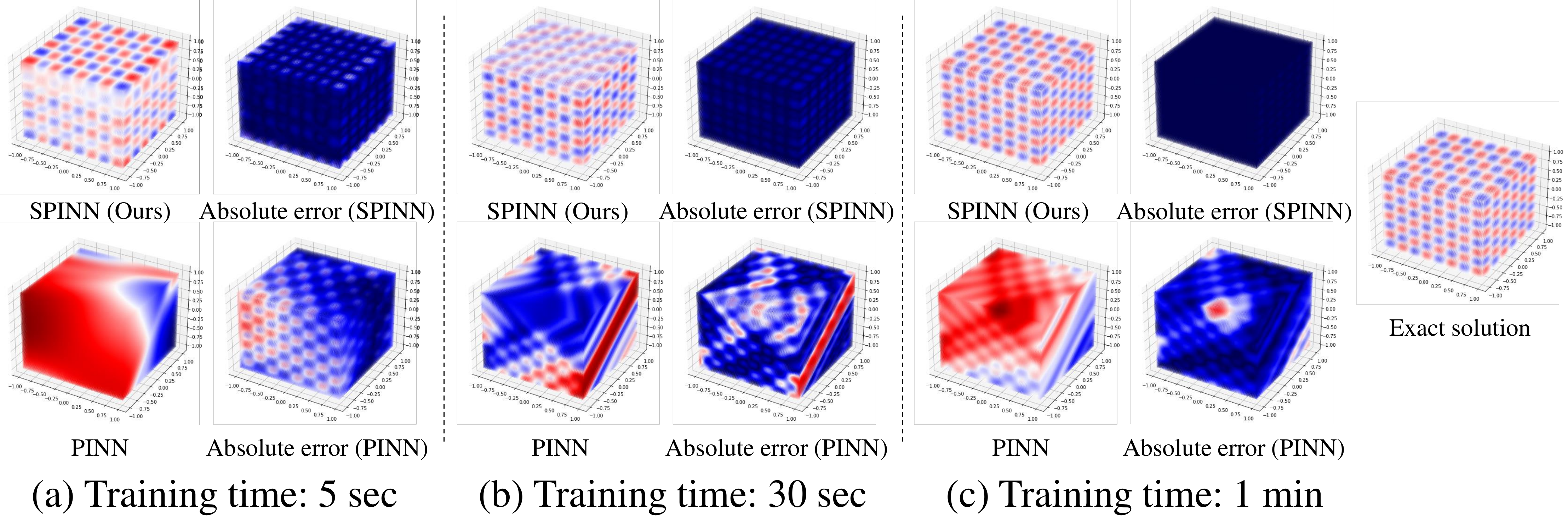}
  \vspace{-2em}
  \caption{Visualized solution of \textbf{Helmholtz equation} obtained by the baseline PINN and SPINN, both trained on $64^3$ collocation points.
  }
  \label{fig:helmholtz_vis}
\end{figure*}

\begin{figure*}[ht]
\centering
  \includegraphics[width=\textwidth]{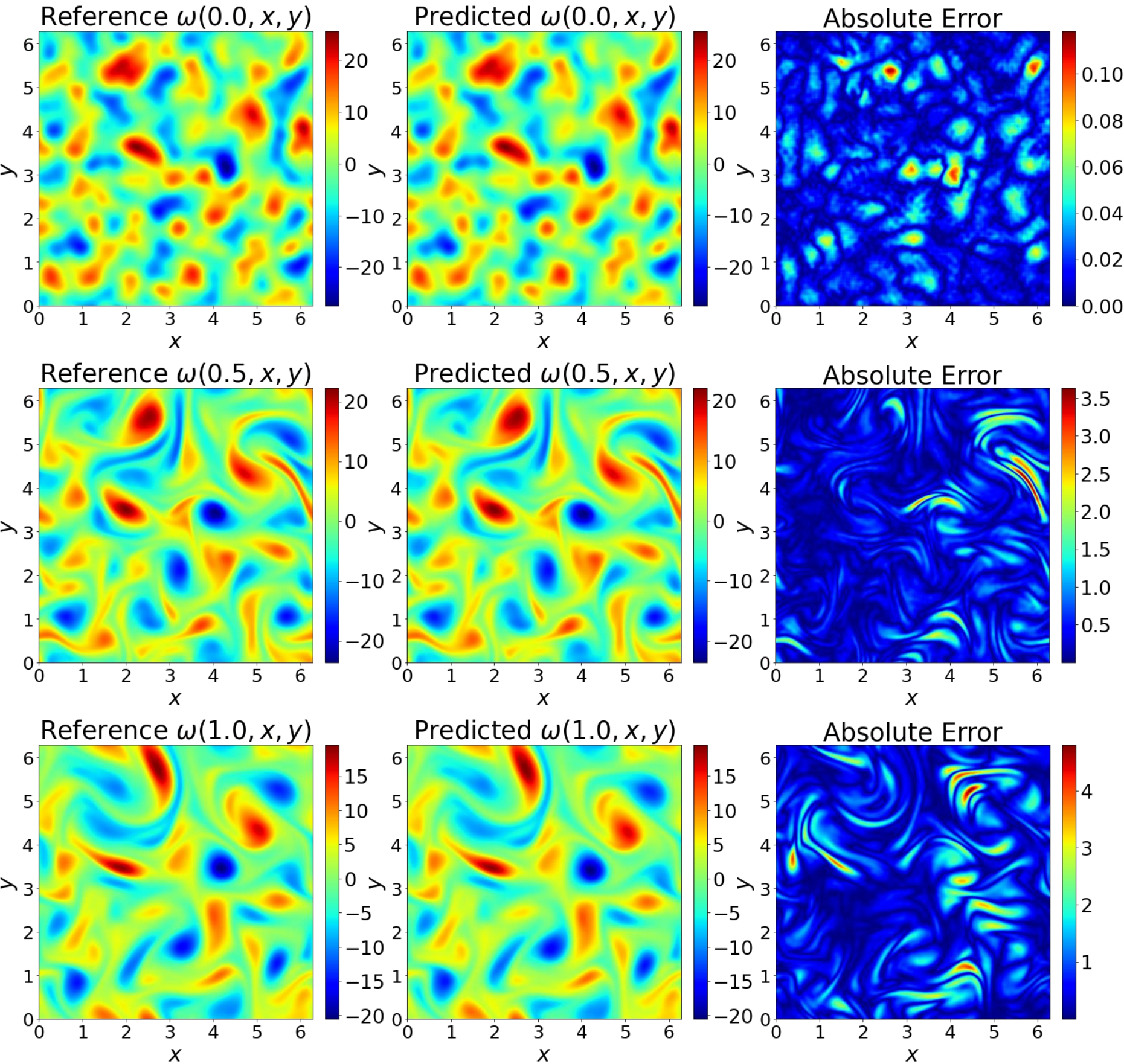}
  \vspace{-2em}
  \caption{Visualized vorticity maps of \textbf{(2+1)-d Navier-Stokes equation} experiment predicted by SPINN.
  Three snapshots at timestamps $t=0.0$, $0.5$, $1.0$ are presented. 
  }
  \label{fig:ns_vis}
\end{figure*}

\begin{figure*}[ht]
\centering
  \includegraphics[width=\textwidth]{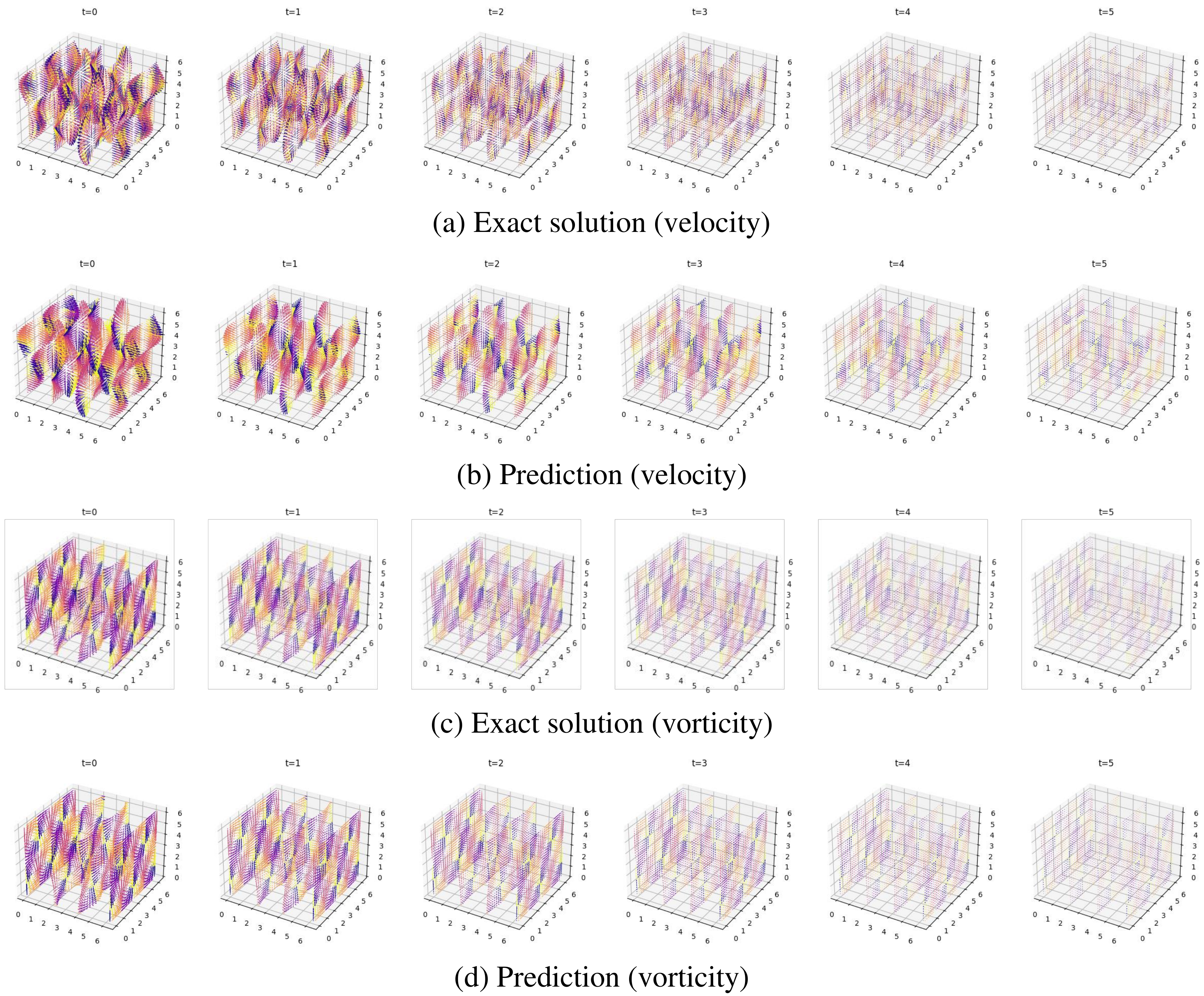}
  \vspace{-2em}
  \caption{Visualized solution of \textbf{(3+1)-d Navier-Stokes equation} obtained by SPINN, trained on $32^4$ collocation points.
  Each arrow is colored by its zenith angle.
  }
  \label{fig:ns4d_vis}
\end{figure*}

\clearpage

\bibliographystylenew{plain}
\bibliographynew{supple}

% [1] Alexander, J.A.\ \& Mozer, M.C.\ (1995) Template-based algorithms for
% connectionist rule extraction. In G.\ Tesauro, D.S.\ Touretzky and T.K.\ Leen
% (eds.), {\it Advances in Neural Information Processing Systems 7},
% pp.\ 609--616. Cambridge, MA: MIT Press.

% [2] Bower, J.M.\ \& Beeman, D.\ (1995) {\it The Book of GENESIS: Exploring
%   Realistic Neural Models with the GEneral NEural SImulation System.}  New York:
% TELOS/Springer--Verlag.

% [3] Hasselmo, M.E., Schnell, E.\ \& Barkai, E.\ (1995) Dynamics of learning and
% recall at excitatory recurrent synapses and cholinergic modulation in rat
% hippocampal region CA3. {\it Journal of Neuroscience} {\bf 15}(7):5249-5262.
% 

%%%%%%%%%%%%%%%%%%%%%%%%%%%%%%%%%%%%%%%%%%%%%%%%%%%%%%%%%%%%

\end{document}